\newtheorem{lem}{Lemma} 
\newtheorem{thm}{Theorem}
\newtheorem{definition}{Definition}
\def\ln{{\rm ln}}
\def\mc{\mathcal}
\def\mb{\mathbf}
\def\mbb{\mathbb}
\def\ra{\rightarrow}
\begin{document}
\title{\textcolor{black}{An opportunistic linear-convex algorithm for localization in mobile robot
networks}}
\author{Sam Safavi,~\emph{Student Member,~IEEE}, and Usman A. Khan$^\dagger$,~\emph{Senior Member,~IEEE}\thanks{$^\dagger$The authors are with the Department of Electrical and Computer Engineering, Tufts University, 161 College Ave, Medford, MA 02155, {\texttt{\{sam.safavi@,khan@ece.\}tufts.edu}}. This work is partially supported by an NSF Career award: CCF \# 1350264.}}
\maketitle
\thispagestyle{empty}

\begin{abstract}
In this paper, we develop a \textcolor{black}{\emph{distributed}} algorithm to localize a network of robots moving arbitrarily in a bounded region. In the case of such mobile networks, the main challenge is that the robots may not be able to find nearby robots to implement a distributed algorithm. We address this issue by providing an opportunistic algorithm that only implements a location update when there are nearby robots and does not update otherwise. We assume that each robot measures a noisy version of its motion and the distances to the nearby robots. 
To localize a network of mobile robots in~$\mathbb{R}^m$, we provide a simple \emph{linear} update, which is based on barycentric coordinates and is linear-convex.
We abstract the corresponding localization algorithm as a Linear Time-Varying (LTV) system and show that it asymptotically converges to the true locations~of~the robots.

We first focus on the noiseless case, where the distance and motion vectors are known (measured) perfectly, and provide sufficient conditions on the convergence of the algorithm. We then evaluate the performance of the algorithm in the presence of noise and provide modifications to counter the undesirable effects of noise. \textcolor{black}{We further show that our algorithm precisely tracks a mobile network as long as there is at least one known beacon (a node whose location is perfectly known).}
\end{abstract}

\section{Introduction}\label{sec1}
In order to navigate reliably and perform useful tasks in robotic networks, a mobile robot must know its exact location. Robot localization, estimation of a robot's location from sensor data, is thus a fundamental problem in providing autonomous capabilities to a mobile robot.
Although some robotic systems rely on a Global Positioning System (GPS) to determine their location in a global reference frame, it is impractical to use GPS in many indoor applications. Localization problems in  robotics can be broadly grouped into two categories: position
tracking; and global localization.
Position tracking, e.g.,~\cite{burgard1997fast}, also known as local localization, requires the knowledge of robots' starting locations, whereas in global localization, e.g.,~\cite{fox1999markov}, no prior estimate of the initial locations exists. 
\textcolor{black}{In terms of the information used for estimating the locations, localization schemes can be classified as range-based,\cite{dil2006range,zhang2005range},
and range-free,\cite{hu2004localization,rudafshani2007localization}. 
While the former depends on measuring the distance and/or angle between the nodes, 
the latter makes no assumptions about the availability of such information and relies on the connectivity of the network.}

Since the early work on navigation with autonomous mobile robots, a variety of  centralized and distributed  techniques have been proposed to tackle the localization problem. Despite higher accuracy in small-sized networks, centralized localization schemes suffer from scalability issues, and are not feasible in large-scale networks. In addition, comparing to the distributed algorithms, the centralized schemes are less reliable and require higher computational complexity due to, e.g., the accumulated inaccuracies caused by multi-hop communications over a wireless network. 

Robot localization approaches include but are not limited to dead-reckoning,~\cite{borenstein1996measurement,cho2011dead,fu2013precise},
Simultaneous Localization and Mapping (SLAM),\textcolor{black}{\cite{cummins2008fab,frese2005multilevel,bailey2006simultaneous,grisetti2007improved,engel2014lsd}},  
Monte Carlo techniques~\cite{hu2004localization,rudafshani2007localization,dil2006range,dellaert1999monte,thrun2001robust}, and
\textcolor{black}{Kalman Filtering  methods~\cite{A1roumeliotis2002distributed,A2carrillo2013decentralized,A3li2013cooperative,A4wanasinghe2014decentralized,A5martinelli2005multi,martinelli2005observability,ahmad2013extended,biswas2013multi,d2015mobile}}; 
other related works include\textcolor{black}{\cite{7862901,thomas2005revisiting,zhou2009efficient,maxim2008trilateration,betke1997mobile,mourikis2006performance,mourikis2006optimal,dieudonne2010deterministic,zhou2011determining,franchi2013mutual,6469211,6631396,mirkhani2013novel}}. We briefly describe the related work below.

\textcolor{black}{Dead-reckoning,~\cite{borenstein1996measurement,cho2011dead}, is a common method to estimate the location of a mobile robot. It uses the wheel rotation measurements to compute the offset from a known starting position. Despite the low cost, simplicity, and easy implementation in real time, dead reckoning methods are prone to accuracy problems due to accumulating wheel slippage errors, which grow without bound over time. Therefore, these methods are only suitable for applications where the robots have good estimates of their initial locations, and their tasks involve exploring only short distances.}

\textcolor{black}{When the map of the environment is not available a priori, Simultaneous Localization and Mapping techniques,\textcolor{black}{\cite{cummins2008fab,frese2005multilevel,bailey2006simultaneous,grisetti2007improved}}, can be used to build a map of an unexplored environment
by a mobile robot, while simultaneously navigating the environment using the map. The main disadvantage of most SLAM algorithms is the high computational complexity, which makes them less efficient specially in larger multi-robot networks. \textcolor{black}{Another disadvantage of SLAM-based solutions is the sharing of the map when multiple robots are involved.}}

\textcolor{black}{When ranging data is noisy, estimation-based localization techniques are widely used.  Sequential Bayesian Estimation (SBE) methods use the recursive Bayes rule to estimate the likelihood of a robot's location. The solution to SBE is generally intractable and cannot be determined analytically. An alternative approach is Kalman-based techniques, which are only optimal when the uncertainties are Gaussian and the system dynamics are linear. However, localization has always been considered as a nonlinear problem and hence the optimality of Kalman-based solutions are not guaranteed.} 

\textcolor{black}{To address the nonlinear nature of localization problems, other suboptimal solutions to approximate the optimal Bayesian estimation include Particle Filters (PF) and Extended Kalman Filters (EKF).} In particular, Sequential Monte Carlo (SMC) method is a PF that exploits posterior probability to determine the future location of a robot. Monte Carlo Localization (MCL) algorithms can solve global localization in a robust and efficient way.
\textcolor{black}{
For example, Ref.~\cite{hu2004localization} introduces the Monte Carlo Localization (MCL) method, which exploits mobility to improve the accuracy of localization. Inspired by~\cite{hu2004localization}, the authors in~\cite{rudafshani2007localization} propose Mobile and Static sensor network Localization (MSL*) that extends MCL to the case where some or all nodes are static or mobile. On the other hand, Ref~\cite{dil2006range} provides Range-based Sequential Monte Carlo Localization method (Range-based SMCL), which combines range-free and range-based information to improve localization performance in mobile sensor networks.}
However, MCL methods are time-consuming as they need to keep sampling and filtering until enough samples are obtained to represent the posterior distribution of a mobile robot's position\textcolor{black}{\cite{4557738,thrun2001robust}.}

\textcolor{black}{On the other hand, Extended Kalman Filter (EKF) approaches,~\cite{A1roumeliotis2002distributed,A2carrillo2013decentralized,A3li2013cooperative,A4wanasinghe2014decentralized,A5martinelli2005multi,martinelli2005observability}, provide suboptimal solutions by linearizing the measurements around the robot's current position estimate,~\cite{A5martinelli2005multi}.} 
\textcolor{black}{In particular, Ref.~\cite{A1roumeliotis2002distributed}} provides a distributed EKF algorithm to estimate the location as well as the rotation of a robot by decomposing a single KF into a number of smaller communication filters. Improvements based on \textit{Covariance Intersection} (CI) is provided in~\cite{A2carrillo2013decentralized,A3li2013cooperative,A4wanasinghe2014decentralized}. \textcolor{black}{The problem of relative localization of two mobile robots in $\mathbb{R}^2$ is considered in~\cite{martinelli2005observability}. By applying the observability rank
	condition for nonlinear systems, the authors in ~\cite{martinelli2005observability} show that the underlying physical nonlinear system of Cooperative
	Localization (CL) in general has three unobservable degrees of freedom, corresponding to the global position
	and orientation, i.e., the system is observable only for more than three landmarks. When both robots move simultaneously, the error grows unbounded and the
	configuration is not observable.}

\textcolor{black}{Relevant to our work, are~\cite{zhou2011determining} and \cite{franchi2013mutual}; Ref.~\cite{zhou2011determining} addresses CL in 3D using combinations of
	range and bearing measurements in conjunction with ego-motion
	measurements. 
	Ref.~\cite{franchi2013mutual} on the other hand, proposes a decentralized method to perform mutual localization in multi-robot systems using relative measurements that do not include the identity of the measured robot.}

In this paper, we provide a \textit{distributed} algorithm to solve \textit{global} localization in a network of mobile robots, i.e., we assume that there is no prior estimates of the initial locations.
\textcolor{black}{As a motivating scenario, consider a multi-robot network of ground/aerial vehicles with no central or local
	coordinator and with limited communications, whose task is to
	transport goods in an indoor facility, where GPS signals are
	not available. In order to perform a delivery task, each mobile
	robot has to know its own location first. In such settings, we
	are interested in developing distributed algorithms to track the
	robot locations such that the convergence is invariant to the
	initial position estimates. However, the implementation of distributed
	algorithms in mobile networks, \textcolor{black}{where both robots and known beacon(s) are moving}, is not straightforward due to the following challenges:}
\begin{inparaenum}[(i)]
\item no robot may be in proximity of any \textcolor{black}{known beacon}
\textcolor{black}{(hereinafter, referred to as a beacon for simplicity)};
\item
a robot may not be able to find nearby robots at all times to perform a distributed algorithm;
and, \item the dynamic neighborhood at each robot results into a time-varying distributed algorithm, 
   whose stability (convergence) analysis is non-trivial.
\end{inparaenum}

\textcolor{black}{We address these challenges by providing an opportunistic update scenario, where a robot updates its location estimate in $\mathbb{R}^m$ only if it lies inside a convex hull of $m+1$ neighbors. Such neighbors are referred to as a \textit{triangulation set}. Using this approach, we show that the robot location estimates are improved as the procedure continues and the algorithm is \textit{optimal}, i.e., it tracks the true robot
locations.}
\textcolor{black}{In this context, the main contribution of this work is to develop a \textit{linear} framework for localization that enables us to circumvent the challenges posed by the predominant  nonlinear approaches to this problem. This linear framework is not to be interpreted as a linearization of an existing nonlinear algorithm. Instead, the nonlinearity from range to location is embedded in an alternate representation provided by the barycentric coordinates.}

We abstract our localization algorithm as an LTV system, whose system matrices may be
\emph{stochastic} or \emph{sub-stochastic}.
{\textcolor{black}{These system matrices do not belong to a finite or a countable set rendering many of the existing results inapplicable. In addition, since they can be either stochastic or sub-stochastic, their product is no longer a group under multiplication}}. Thus, establishing the asymptotic behavior of the underlying LTV system is non-trivial. \textcolor{black}{To address these challenges, we apply a novel method,~\cite{DBLP:journals/corr/SafaviK14,7526779}, to study LTV convergence by partitioning the entire set of
	system matrices into slices and relating the convergence rate
	to the slice lengths. In particular, we show that the algorithm
	converges if the slice lengths do not grow faster than a certain
	exponential rate.}
Next, since our
localization scheme is based on the motion and the distance
measurements, it is meaningful to evaluate the performance of
the algorithm when these parameters are corrupted by noise.
Hence, we study the impact of noise on the convergence
of the algorithm and provide modifications to counter the
undesirable effects of noise.
Finally, we investigate what role does motion play in the behavior of the localization algorithm, \textcolor{black}{and provide necessary conditions in terms of the number of nodes and the dimensions of motion required by our approach. As a consequence, the proposed method does not work, e.g., in the case of one beacon and \textit{less than three robots} (in $\mathbb{R}^2$). Alternative approaches (such as SLAM and filtering-based localization methods) may be used in such settings, or in scenarios where a robot is isolated from the entire network.} 
%

The rest of this paper is organized as follows. In Section~\ref{sec2}, we formulate the problem.
We propose our localization algorithm 
in Section~\ref{sec3}, followed by the convergence analysis in Section~\ref{sec4}. We investigate the effects of noise in Section~\ref{noise}, while in Section~\ref{sec5} we discuss 
the relationship between the dimension of motion and the minimum number of beacons required. 
\textcolor{black}{We discuss different aspects of the algorithm in Section~\ref{chal}} and present detailed
simulation results in Section~\ref{sec7}. Finally, Section~\ref{sec8} concludes the paper.

\vspace{-1mm}
\section{Preliminaries and problem formulation}\label{sec2}
\textcolor{black}{Consider a network of~$N$ robots with unknown locations in the index set,~$\Omega$, and~$M$ beacons with known locations in the index set,~$\kappa$.} Let~$\Psi=\Omega\cup\kappa$ be the set of all nodes (robots and beacons), possibly mobile, located in~$\mathbb{R}^{m},~m\geq1$. \textcolor{black}{We assume that the robots occupy a \textit{non-trivial configuration}, i.e., all of them do not remain on a low-dimensional subspace in~$\mbb{R}^m$ effectively reducing the localization problem to $\mbb{R}^{m-1}$.} We denote the \emph{true location} of the~$i$-th node,~$i\in\Psi$, at time~$k$, with an~$m$-dimensional row vector,~$\mb{x}_k^{i\ast}\in\mbb{R}^m$, 
where~$k\geq0$ is the discrete-time index. The problem is to find the locations of the mobile robots in the set~$\Omega$, given any initialization of the underlying algorithm. 
\textcolor{black}{In what follows, we explain our system model and 
describe a related convex hull inclusion test}.
We then provide
a set of assumptions, which we use later in the design and analysis of our
 algorithm. 

\vspace{-3mm}
\subsection{\textcolor{black}{System model}}\label{mmm} 
We express the motion as the deviation from the current to the next locations, i.e.,
\begin{eqnarray}\label{Eq1}
\mb{x}_{k+1}^{i\ast} = \mb{x}_{k}^{i\ast} + {\widetilde{\mb{x}}_{k+1}^i},\qquad i\in\Psi,
\end{eqnarray}
in which~${\widetilde{\mb{x}}_{k}^i}$ is the true motion vector at time $k$. We assume that robot~$i$ measures a noisy version,~${\widehat{\mb{x}}_{k}^i}$, of this motion, e.g., \textcolor{black}{by integrating the measurements gathered by an accelerometer}: 
\begin{eqnarray}\label{Eq2}
{\widehat{\mb{x}}_{k}^i}={\widetilde{\mb{x}}_{k}^i}+n^i_{k},
\end{eqnarray}
where $n^i_{k}$ represents the \textcolor{black}{integration noise} at time $k$. We define the true distance between any two nodes, $i$ and~$j$, \textcolor{black}{at the time of communication,~$k$}, as {\textcolor{black}{$\widetilde{d}^{ij}_k$}. We assume that the motion is restricted in a bounded region in~$\mbb{R}^m$ and that the measured distance,~$\widehat{d}^{ij}_k$, at robot $i$ includes noise,~i.e.,
	\begin{equation}\label{Eq3}
	\widehat{d}^{ij}_k = {\textcolor{black}{\widetilde{d}^{ij}_k}} + r^{ij}_k,
	\end{equation}
where $r^{ij}_{k}$ is the {\textcolor{black}{noise in} the distance measurement at time~$k$. 		

\vspace{-2mm}
\subsection{Inclusion test and Barycentric representation}\label{ss:DILOC}
As we will explain, to perform an update, each robot,~$i$, must find~$m+1$ neighbors, say~$\{1,2,3\}$ in~$\mbb{R}^2$, such that it lies \textcolor{black}{strictly in the interior of} their convex hull,~$\mathcal{C}(\cdot)$--a triangle formed by~$\{1,2,3\}$. These neighbors, $\{1,2,3\}$, form what we refer to as a~\emph{triangulation set},~$\Theta_i(k)$, at time $k$. \textcolor{black}{We note that the triangulation set,~$\Theta_i(k)$, has a non-zero volume, i.e.,~$A_{\Theta_i(k)}>0$, where~$A_{\Theta_i(k)}$ represents the~$m$-dimensional volume, area in~$\mathbb{R}^2$ or volume in~$\mathbb{R}^3$, of~$\mathcal{C}(\Theta_i(k))$, see Fig.~\ref{sim1_5}, and can be computed by using the Cayley-Menger (CM) determinant,~\cite{sippl1986cayley}, see Appendix~\ref{cm}}. Clearly,~\textcolor{black}{$i\in\mathcal{C}(\Theta_i(k))$ and~$|\Theta_i(k)|=m+1$}.
Thus to perform an update, a robot implements an inclusion test to verify if it lies inside an available convex hull. In any arbitrary dimension,~$m$, a convex hull inclusion test can be described as follows: 
\begin{eqnarray}\label{convEQ}
i\in\mathcal{C}(\Theta_i(k)),\qquad \mbox{if } \sum_{j\in\Theta_i(k)}A_{\Theta_i(k)\cup\{i\}\setminus j} = A_{\Theta_i(k)},\\
i\notin\mathcal{C}(\Theta_i(k)),\qquad \mbox{if } \sum_{j\in\Theta_i(k)}A_{\Theta_i(k)\cup\{i\}\setminus j} > A_{\Theta_i(k)},
\end{eqnarray}
in which~$``\setminus"$ denotes the set difference. \textcolor{black}{Straightforward
computation of Cayley-Menger determinants in $\mathbb{R}^2$ and $\mathbb{R}^3$, allows us to easily compute and take advantage
of the barycentric coordinates in order to avoid nonlinearity in the localization process. CM determinant provides
a simple equation that extends to any dimensions. It computes the area/volume of m-dimensional simplexes and
reduces to, e.g., Heron’s formula for the area of a triangle in $\mathbb{R}^2$, and volume of a tetrahedron in $\mathbb{R}^3$.}
We note that~$A_{\Theta_i(k)\cup\{i\}\setminus j}$ is the volume (area) of the set~$\Theta_i(k)$ with node~$i$ added and node $j$ removed. Furthermore, the test only requires pairwise distances among the nodes in~$\{i\}\cup\mc{C}_i(\cdot)$, \textcolor{black}{see Section~\ref{chalb} for the associated computation complexity.}
\vspace{-5mm}
\begin{figure}[!h]
	\centering
    \includegraphics[width=70mm]{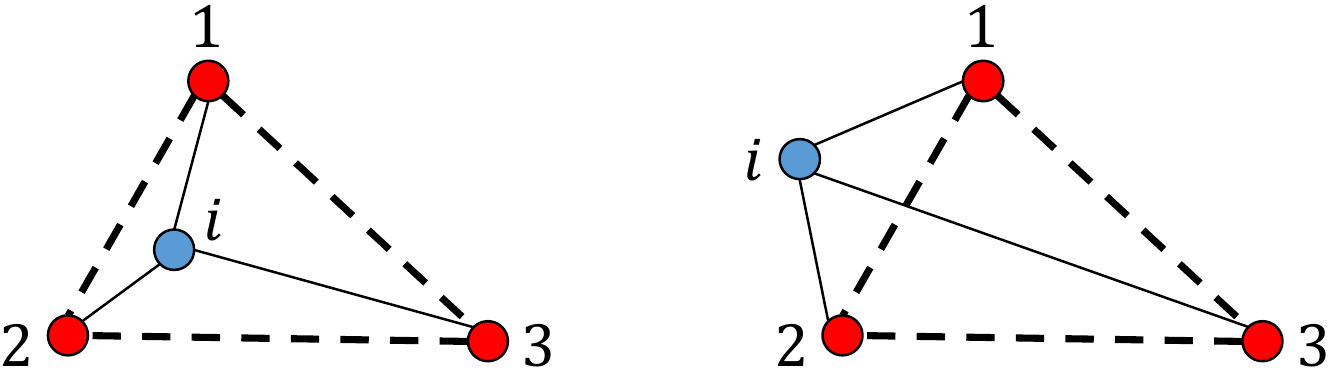}
	\caption{\textcolor{black}{In $\mbb{R}^2$ the inclusion test is passed when the sum of the inner triangles equals the area of the outer triangle (convex hull); (Left) Robot~$i$ lies inside the triangle formed by the neighboring nodes,~$1$,~$2$ and~$3$, and the test is passed; (Right) Inclusion test is not passed.}}
	\label{sim1_5}
\end{figure}

With the help of triangulation sets, we represent the location of node~$i$ at time $k$, ${\mathbf{x}_{k}^{i\ast}}$, as a linear-convex combination of the locations of the neighboring nodes in its triangulation set, see Fig.~\ref{sim1_5} (Left):
\begin{equation}
\textcolor{black}{\mathbf{x}_k^{i\ast}=a_k^{{i}1} \mathbf{x}_k^{1\ast}+a_k^{{i}2} \mathbf{x}_k^{2\ast}+a_k^{{i}3} \mathbf{x}_k^{3\ast}=\sum_{j\in\Theta_i(k)}a_k^{ij}\mb{x}_k^{j\ast},}
\end{equation}
where~\textcolor{black}{$a_k^{ij}$'s} are the \emph{barycentric coordinates}, defined as
\begin{equation}\label{eq7}
\textcolor{black}{a_k^{{i}j}=\frac{A_{\Theta_i(k)\cup\{i\}\setminus j}}{A_{\Theta_i(k)}}}.
\end{equation}
The above representation dates back to the early work by Lagrange and M{\"o}bius,~\cite{mobius1827barycentrische}. \textcolor{black}{Since we assume robot~$i$ to lie strictly inside its triangulation set, all barycentric coordinates are strictly positive, i.e.,~$a_{k}^{ij}>0,\forall j\in\Theta_i(k)$.} 

\subsection{Assumptions}
We now enlist our assumptions:

{\bf A0:} Beacon locations~\textcolor{black}{$\mb{u}_k^{m\ast},m\in\kappa$}, are  known at all times.

{\bf A1:} Each robot, $i\in\Omega$, knows a noisy version,~$\widehat{\mb{x}}_k^i$, of its motion,~$\widetilde{\mb{x}}_k^i$, at all times, see Eq.~\eqref{Eq2}.

{\bf A2:} Each robot, $i\in \Omega$, measures a noisy distance, $\widehat{d}^{ij}_k$, to every node, $j\in\Psi$, in its communication radius,~$r$, at time~$k$, see Eq.~\eqref{Eq3}.

We assume that each robot is able to estimate its distances to the nearby nodes 
by using \textcolor{black}{Received Signal Strength Indicator (RSSI), Time of Arrival (ToA), Time Distance of Arrival (TDoA), Direction of Arrival (DoA)~\cite{hamzehei}}, or camera-based methods,~\cite{PatwariThesiss,camera}. Under the above assumptions, we are interested in finding the true locations of each robot \textcolor{black}{\textit{without the presence of any central or local coordinator.}}
In the following, Sections~\ref{sec3}-\ref{sec4}, we consider the ideal scenario when the motion and distance measurements are not effected by noise, i.e.,~$n_k^i=0$ and $r_k^{ij}=0$, in Eqs.~\eqref{Eq2} and~\eqref{Eq3}. We then study the effects of noise in Section~\ref{noise}, and provide modifications to the algorithm to address the noise on the motion as well as the distance measurements.

\section{Localization Algorithm}\label{sec3}
Consider a network of~$\vert \Omega \vert=N$ mobile robots and~$\vert \kappa \vert=M$ possibly mobile beacons in~$m$-dimensional Euclidean space,~${\mathbb{R}}^m$. We define~${\mathcal{N}}_i(k)\subseteq\Psi$ as the set of neighbors of robot,~$i\in\Omega$, at time~$k$. We now describe our localization algorithm \textcolor{black}{(main steps of the proposed localization algorithm are summarized in Algorithm~\ref{ps})}: At the beginning, each robot starts with a random guess of its location estimate \textcolor{black}{(Alg.~\ref{ps} line \ref{op1})}. At each time $k>0$, we consider the following update scenarios for any arbitrary robot,~$i\in\Omega$: 

{\bf{Case (i)}}: If robot~$i$ does not find at least $m+1$ neighbors,~i.e.,~$0\leq\vert{\mathcal{N}}_i(k)\vert < m+1$, it does not update its current location estimate \textcolor{black}{(Alg.~\ref{ps} lines \ref{op2} and \ref{op3})}.

{\bf{Case (ii)}}: If robot~$i$ finds at least $m+1$ neighbors,~i.e.,~$\vert{\mathcal{N}}_i(k)\vert\geq m+1$, it performs the {inclusion test, described in Section~\ref{ss:DILOC}, on all possible combinations of~$m+1$ neighbors \textcolor{black}{(Alg.~\ref{ps} line \ref{op4})}. If robot~$i$ then fails to find a triangulation set, it does not update \textcolor{black}{(Alg.~\ref{ps} line \ref{op5})}, otherwise it applies the following update\textcolor{black}{\footnote{\textcolor{black}{We consider two update scenarios for case (ii); A robot can update with respect to all possible triangulation sets at time $k$, or alternatively, it can only update once as soon as the inclusion test is passed for the first time at time~$k$.}}} \textcolor{black}{(Alg.~\ref{ps} line \ref{op6})}:
\begin{eqnarray}\label{18}
\mb{x}^i_{k+1} = \alpha_k\mb{x}_k^i + (1-\alpha_k) \sum_{j\in\Theta_i(k)}\textcolor{black}{a_k^{ij}}\mb{x}_k^j + \widetilde{\mb{x}}_{k+1}^i,
\end{eqnarray}
where~$\mb{x}^i_{k}$ is the location estimate of robot $i$ at time $k$,~$\Theta_i(k)$ is the triangulation set at time $k$,~$\textcolor{black}{a_k^{ij}>0}$ is the barycentric coordinates of node~$i$ with respect to the node~$j\in\Theta_i(k)$, and~$\alpha_k$ is such that
\begin{eqnarray}\label{alpk}
\alpha_k=\left\{
\begin{array}{ll}
1, & \forall k~|~\Theta_i(k)=\emptyset,\\
\in\left[\beta,1\right), & \forall k~|~\Theta_i(k)\neq\emptyset,
\end{array}
\right.
\end{eqnarray}
in which $\beta$ is a design parameter, \textcolor{black}{see Section~\ref{4c} (v)}.

{\begin{algorithm}
		\footnotesize
		\caption{\textcolor{black}{Localize $N$ robots in $\mathbb{R}^{m}$ in the presence of $M$ beacons with precision $p$}}
		\label{ps}
		\textcolor{black}{\begin{algorithmic}[1]
				\REQUIRE $M \geq 1~\AND~M + N \geq m+2~\AND~M + \dim \underset{i\in\Omega}\cup \mc{M}_i + \dim \underset {j\in\kappa} \cup \mc{U}_j \geq m+1$.\\
				\STATE $k \leftarrow 0$
				\STATE ${\bf{x}}_0 \leftarrow \mbox{Random initial coordinates}$\label{op1}
				\STATE ${\bf{e}}_0 \leftarrow {\bf{x}}^{\ast}_0-{\bf{x}}_0$
				\WHILE{${\Vert{\bf{e}}_k\Vert}_{2} \geq {10^{-p}}$}
				\STATE $k \leftarrow k+$
				\FOR{$i=1$ to $N$}
				\IF{$0\leq\vert{\mathcal{N}}_i(k)\vert < m+1$}\label{op2}
				\STATE Do not update\label{op3}
				\ELSE
				\STATE Perform the inclusion test on (all possible combinations of) $m+1$ neighbors\label{op4}
				\IF{No triangulation set found}
				\STATE Do not update\label{op5}
				\ELSE
				\STATE Update location estimate according to Eq.~\eqref{18}\label{op6}
				\ENDIF
				\ENDIF
				\ENDFOR
				\STATE ${\bf{e}}_k \leftarrow {\bf{x}}^{\ast}_k-{\bf{x}}_k$
				\ENDWHILE
			\end{algorithmic}}
		\end{algorithm}

A network of $\vert \Omega \vert=5$ mobile robots and $\vert \kappa \vert=2$ beacons with fixed locations in $\mathbb{R}^{2}$ is illustrated in Fig.~\ref{f0}. To clarify the updating scenarios let us just consider robot $1$, shown as a blue filled circle. At time~$k_j$, Fig.~\ref{f0} (a), robot $1$ has only one neighbor
 and does not update, as per case (i).
Although at time $k_l$, Fig.~\ref{f0}~(b), robot $1$ finds $m+1=3$ nodes (two robots and one beacon) within its communication radius, it fails to update because the neighbors do not satisfy the inclusion test. In this case,~$\vert \mc{N}_1(k_l) \vert=3$ and~$\Theta_1(k_l)=\emptyset$. However, robot $1$ updates its location estimates at time $k_m$ and $k_n$, where in the \textcolor{black}{former it finds two different triangulation sets: $\{2,3,4\}$ and $\{2,4,5\}$, and disregards $\{3,4,5\}$ and $\{2,3,5\}$.}
\begin{figure}[!h]
	\centering
	\includegraphics[width=60mm]{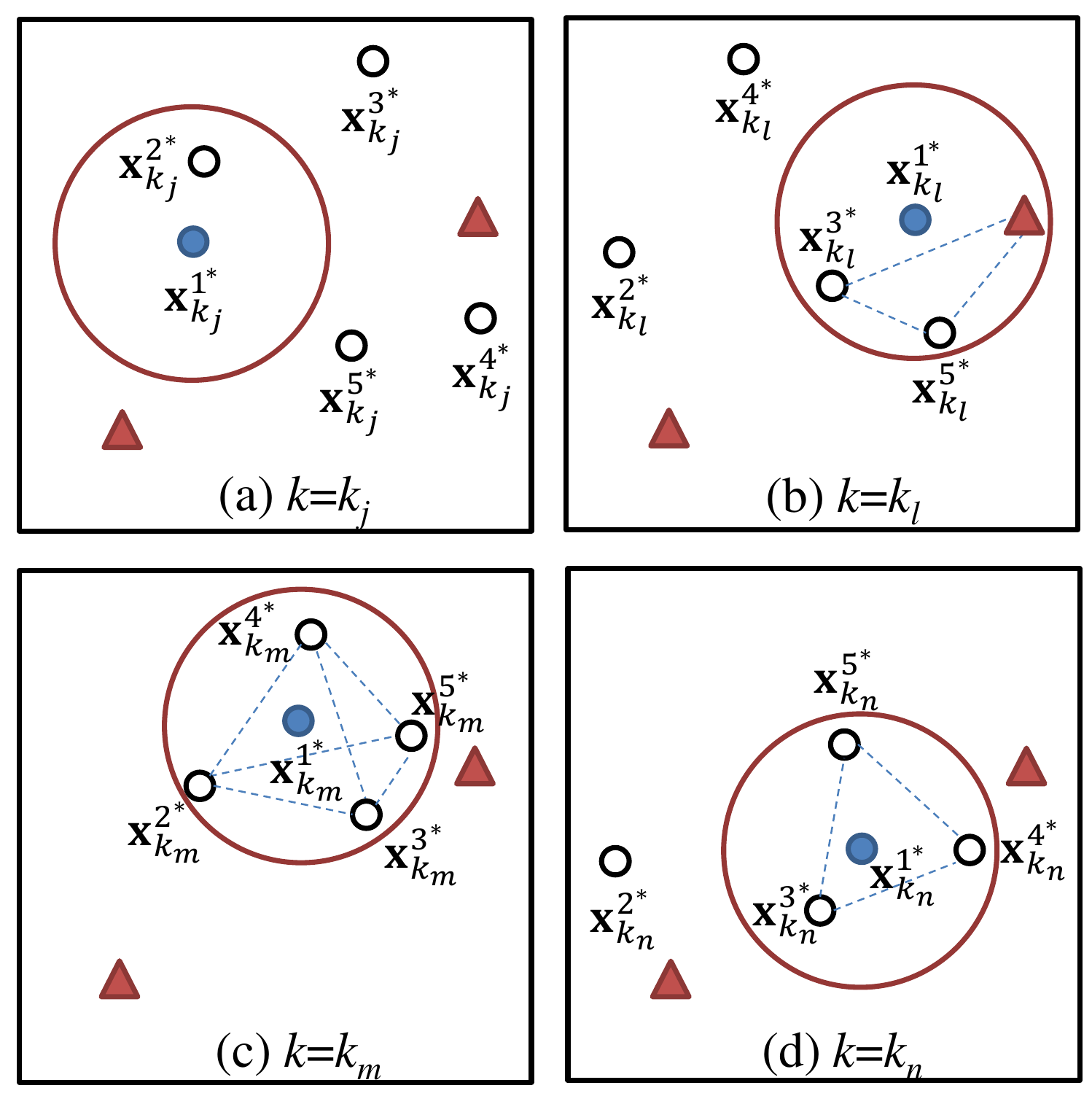}
	\caption{Update scenarios for a network of size $\vert \Psi \vert=7$; Red triangles represent beacons, and each circle represents a robot; robot $1$ is distinguished by a filled circle. The large circle shows the communication radius of robot~$1$.}
	\label{f0}
\end{figure}
\textcolor{black}{At any given time, $k$,} note the difference between~$\mc{N}_i(k)=\emptyset$ and~$\Theta_i(k)=\emptyset$; while the former \textcolor{black}{implies that node $i$ has no neighbors}, as in case (i), the latter implies that
robot~$i$ cannot find any triangulation set among the neighbors.\footnote{{\textcolor{black}{Note that our algorithm can be generalized to higher dimensions, e.g., $3$-D, where triangles and areas become tetrahedrons and volumes, respectively.}}} \textcolor{black}{For example at time~$k_j$, Fig.~\ref{f0} (a), we have~$\mc{N}_i(k)=\{2\}$ and~$\Theta_i(k)=\emptyset$, while at time $k_n$, Fig.~\ref{f0}~(d), we have~$\mc{N}_i(k)=\Theta_i(k)=\{3,4,5\}$.}
 
By separating the barycentric coordinates over the robots in~$\Omega$ and the beacons in $\kappa$, we can split Eq.~\eqref{18} as
\begin{eqnarray}\label{20}
\mb{x}^i_{k+1} &=& \alpha_k\mb{x}_k^i + (1-\alpha_k) \left(\sum_{j\in\Theta_i(k)\cap\Omega}\textcolor{black}{p_k^{ij}}\mb{x}_k^j\right),\nonumber\\ &+&(1-\alpha_k) \left(\sum_{m\in\Theta_i(k)\cap\kappa}\textcolor{black}{b_k^{im}}\mb{u}_k^{m\ast}\right)+\widetilde{\mb{x}}_{k+1}^i,\qquad\label{24}
\end{eqnarray}
where $\mb{u}_k^m$ is the true location of $m$-th beacon at time~$k$~and
\begin{eqnarray}
\textcolor{black}{a_k^{ij}} = 
\begin{cases}
\textcolor{black}{p_k^{ij}},& \mbox{if}~j \in \Theta_i(k)\cap\Omega,\\
\textcolor{black}{b_k^{im}},& \mbox{if}~m \in \Theta_i(k)\cap\kappa.
\end{cases}
\end{eqnarray}
It can be inferred from Eqs.~\eqref{alpk} and~\eqref{20} that \textcolor{black}{at time $k$ the self-weight assigned to the $i$-th robot's estimate, $p_k^{ii}$,} is always lower bounded, i.e.,
\begin{eqnarray}\label{B0eq}
0<\beta\leq \textcolor{black}{p_k^{ii}}\leq1,\qquad \forall k,i \in \Omega. 
\end{eqnarray}

Although it is possible that more than one robot finds a triangulation set and update at time $k$, in the remaining of the paper we assume, without loss of generality, that at most one robot updates at each iteration. We can now write the above algorithm in matrix form as follows:
\begin{eqnarray}\label{eq1}
{\bf{x}}_{{k+1}}={\bf{P}}_{{k}}{{\bf{x}}_{{k}}}+{\bf{B}}_{{k}}{\bf{u}}_{k}+\widetilde{{\bf{x}}}_{{k+1}},\qquad k>0,
\end{eqnarray}
in which~${\bf{x}}_{{k}} \in \mathbb{R}^{N\times m}$ is the vector of robot coordinates at time~$k$,~${\bf{u}}_{{k}} \in \mathbb{R}^{M\times m}$ is the vector of beacon coordinates at time~$k$, and~$\widetilde{{\bf{x}}}_{{k+1}}\in \mathbb{R}^{N\times m}$ is the change in the location of robots at the beginning of the~$k$-th iteration according to Eq.~\eqref{Eq1}. Note that ${\bf{P}}_{{k}}$, and~${\bf{B}}_{{k}}$, the system matrix and the input matrix of the above LTV system, contain 
(weighted) barycentric coordinates at time $k$, with respect to the robots with unknown locations, and beacons, respectively. 

Since true information is only injected into the network by the beacons, we must set a lower bound on the weights assigned to the beacon states. {\textcolor{black}{Otherwise, the beacons may be assigned a weight that goes to zero over time, i.e., the beacons eventually are excluded from the network.}} To make sure that beacons remain in the network, we naturally make the following assumption.

{\bf A3: Guaranteed beacon contribution}. \textcolor{black}{When a beacon is involved in an update, i.e., for any~$b_k^{im} \neq 0$, we impose~that}
\begin{eqnarray}\label{23}
0 < \alpha \leq \textcolor{black}{b_k^{im}},\qquad \forall k, i\in\Omega, {m\in\Theta_i(k)\cap\kappa},
\end{eqnarray}
\textcolor{black}{where $\alpha$ is the minimum beacon contribution, see Section~\ref{4c} (vi)}. 

The above assumption implies that if there is a beacon in the triangulation set, a certain amount of information is always contributed by the beacon. In other words, the update occurs only if the robot lies in an \textit{appropriate} location inside the convex hull. By Assumption {\bf{A3}}, the weight assigned to the beacon should be at least~$\alpha$. This weight comes from the  barycentric coefficient corresponding to the beacon,~i.e.,
\begin{eqnarray}\label{37}
\textcolor{black}{b_k^{im}} = \frac{A_{\Theta_i(k)\cup\{i\}\setminus m}}{A_{\Theta_i(k)}},\qquad m\in\kappa,
\end{eqnarray}
where $i$ and $m$ indices represent the updating robot and a beacon, respectively. To clarify Assumption {\bf{A3}}, we consider~Fig.~\ref{alpha} where the updating robot lies inside the convex hull of three nodes, including two other robots and one beacon; let~$\alpha=0.25$.
Fig.~\ref{alpha} (Left) shows a situation, where the robot updates, and \textcolor{black}{the beacon provides the exact} minimum contribution, i.e., the area of the shaded triangle is one fourth of the area of the convex hull triangle. 
\begin{figure}[!h]
	\centering
	\includegraphics[width=75mm]{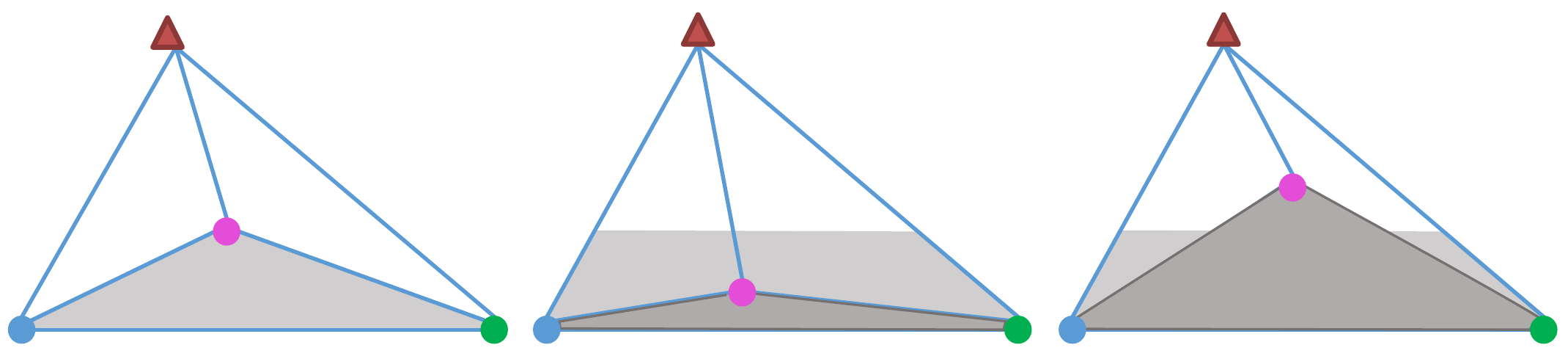}
	\caption{(Left) Robot is located on the threshold boundaries, where it updates and assigns the \textcolor{black}{exact} minimum weight, $\textcolor{black}{b^{ij}}=\alpha=0.25$, to the beacon; (Middle) Robot is located in an inappropriate location inside the convex hull, \textcolor{black}{no updates occurs}; \textcolor{black}{(Right) Robot updates with $b^{ij}>\alpha$}.}
	\label{alpha}
\end{figure}
On the other hand, if the robot lies within the shaded trapezoid illustrated in \textcolor{black}{Fig.~\ref{alpha}~(Middle)},
Eq.~\eqref{37} becomes less than $\alpha$, and Eq.~\eqref{23} does not hold. No update occurs in this case since the beacon does not provide enough valuable information for the robot. The upper side of the shaded trapezoid is the threshold boundary, such that if the robot lies on this line, an update occurs and the beacon provides the \textcolor{black}{exact} minimum contribution. Thus, our measure to evaluate the contribution of a beacon in an update, is whether or not the updating robot lies inside a specific area, and not the distance between the robot and the beacon. \textcolor{black}{If the updating robot stays inside the same convex hull, and moves closer to the beacon, as shown in Fig.~\ref{alpha} (Right), the robot performs an update and the weight assigned to the beacon exceeds $\alpha$.}
%
Note that in this example we set~$\alpha=0.25$ only for the ease of demonstration. The localization algorithm works for any arbitrary small (non-zero) value of $\alpha$, which corresponds to higher possibilities of a successful update when a beacon is involved. 

With the lower bounds on the self-weights according to Eq.~\eqref{B0eq}, and the lower bounds on the weights assigned to the beacons according to Eq.~\eqref{23}, the matrix of barycentric coordinates with respect to robots with unknown locations, i.e., the system matrix at time~$k$,~${\bf{P}}_{k}$, is either 
\begin{enumerate}[(i)]
	\item \emph{identity}, when no robot updates at time\textcolor{black}{\footnote{\textcolor{black}{Note that this case also characterizes the case, where a robot fails to update due to a (temporary) communication loss/drop.}}} $k$; or,
	\item \emph{identity except a stochastic~$i$-th row}, when a robot, say $i$, updates but finds no beacon in its triangulation set at time~$k$,~i.e.,$\Theta_i(k)\neq\emptyset$ and~$\Theta_i(k)\cap\kappa=\emptyset$;~or,
	\item \emph{identity except a sub-stochastic~$i$-th row}, when the updating robot, say $i$, has at least one beacon in its triangulation set at time~$k$,~i.e.,~$\Theta_i(k)\cap\kappa \neq \emptyset$. 
\end{enumerate}

 \noindent Note that when a robot, say $i$ at time~$k$, performs an update only with robots in the set~$\Omega$ (case (ii) above), the $i$-th row of the system matrix,~$\mathbf{P}_k$, contains all of the non-zero weights (corresponding to the barycentric coordinates) and~$\mb{B}_k$ is a zero matrix. Each row of $\mb{P}_k$ is identity except the $i$th row whose sums is one due to convexity. However, if a beacon in the set~$\kappa$ is involved in an update (case (iii) above), the $i$-th row of the input matrix,~$\mathbf{B}_k$, contains a non-zero weight assigned to that beacon. Therefore the weights assigned to the other neighboring robots sum to a value strictly less than one; in other words,~$\mb{P}_k$ is identity except the $i$-th row whose sum is strictly less than one making~$\mb{P}_k$ sub-stochastic. 

In the next section, we investigate the asymptotic stability of LTV systems where the system matrices follow the cases described above, and provide sufficient conditions for the iterative localization algorithm, Eq.~\eqref{eq1}, to converge to the true robot locations. Before we proceed, let us make the following definitions to clarify what we mean by stochasticity and sub-stochasticity throughout this~paper:
\begin{definition}
	A non-negative, stochastic matrix is such that all of its row sums are one. A non-negative, sub-stochastic matrix is such that it has at least one row that sums to strictly less than one and every other row sums to at most one.
\end{definition}

\section{Convergence analysis}\label{sec4}
We start this section by providing a related result on the convergence of an
infinite product of stochastic/sub-stochastic matrices,\textcolor{black}{~\cite{DBLP:journals/corr/SafaviK14}}.
We will then adapt these results to investigate the convergence of our localization algorithm represented by Eq.~\eqref{eq1}. 

\vspace{-0mm}

\subsection{Asymptotic behavior}\label{4a}
Consider the following LTV system:
\begin{equation}\label{14}
\mb{x}_{k+1}=\mb{P}_k\mb{x}_k,
\end{equation}
in which the time-varying system matrix,~$\mb{P}_k\triangleq\{p_k^{ij}\}$, is non-negative, random, and represents at most one state update, say at the~$i$-th row, for any~$k$. In other words,~$\mb{P}_k$'s can randomly switch between stochastic, and sub-stochastic matrices. In addition, assume the following on the update at time $k$: 

{\bf B0}: If $\sum_j p_k^{ij} =1$, i.e., the updating row,~$i$, in~$\mb{P}_k$~is stochastic, then the $i$-th self-weight is lower-bounded
as
\begin{eqnarray}\label{bnd1}
0 < \beta_1 \leq p_k^{ii},\qquad\beta_1\in\mbb{R}.
\end{eqnarray}

{\bf B1}: If $\sum_j p_k^{ij} < 1$, i.e., the updating row,~$i$, in~$\mb{P}_k$ is sub-stochastic, then it is also upper-bounded as
\begin{eqnarray}\label{bnd2}
\sum_jp_k^{ij} \leq\beta_2 < 1,\qquad\beta_2\in\mbb{R}.
\end{eqnarray}


To study the asymptotic behavior of an LTV system with such system matrices, in\textcolor{black}{~\cite{DBLP:journals/corr/SafaviK14}}, we introduce the notion of a slice,~$M_j$, as the smallest product of consecutive system matrices, 
such that: (i) the infinity norm of each slice is less than one, i.e.,~$\|M_t\|_\infty<1,\forall t$; \emph{and},
(ii) the entire sequence of system matrices is covered by non-overlapping slices,~i.e.,
%
\begin{equation}\label{mp}
\prod_t {{\bf{M}}}_{t} = \prod_{k} {\bf{P}}_{k}.
\end{equation}
Each slice initiates with a sub-stochastic system matrix, and terminates after all row sums become less than one, i.e., each row in the slice becomes sub-stochastic. The length of the $j$-th slice,~${\vert {M_j}\vert}$, is defined as the number of matrices forming the slice, and the upper bound on the infinity norm of a slice is further related to its length,
~\cite{7526779}, as 
\begin{equation}\label{43}
{{\Vert M_j \Vert}_{\infty}} \leq 1- {\beta_1}^{\vert {M_j}\vert - 1}{\beta}_2.
\end{equation} 

The following theorem characterizes the asymptotic behavior of the LTV system represented by Eq.~\eqref{14}:

\begin{thm}\label{thm0}
	With Assumptions~{\bf{B0-B1}}, the LTV system,~$\mb{x}_{k+1}=\mb{P}_k\mb{x}_k$, converges to zero
		, i.e., $\lim_{k\ra\infty}\mb{x}_k = \mb{0}_N$,
		if any one of the following is true:
		\begin{enumerate}[(i)]
			\item Each slice has a bounded length, i.e.,
			\begin{eqnarray}\label{51}
			\textcolor{black}{\vert M_j \vert \leq L < {\infty} , \qquad \forall j,~L \in \mathbb{N};}
			\end{eqnarray}
			
			\item \textcolor{black}{There exist an infinite subset,~$J_1$, of
			 slices such that}
			\begin{eqnarray}
			\vert M_{j} \vert \leq L_1 < {\infty}, \qquad \forall M_j\in J_1,\\
			\vert M_{j} \vert < {\infty}, \qquad \forall M_j \notin J_1;
			\end{eqnarray}
			
			\item \textcolor{black}{For every~$i \in \mathbb{N}$, there exists an infinite subset,~$J_2$, of slices such that} 
			\begin{equation}\label{growth}
			\exists M_j \in J_2:~~\vert M_j \vert \leq \frac{1}{\ln\left({\beta_1}\right)}\ln\left(\frac{1 - e^{(-\gamma_2i^{-\gamma_1})}}{1-\beta_2}\right)+1,
			\end{equation}
			for some~$\gamma_1 \in [0,1]$, $\gamma_2>0$. For any other slice,~$M_j,~j\notin J_2$ we have~$|M_j|<\infty$. 
		\end{enumerate}
\end{thm}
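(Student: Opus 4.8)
The plan is to reduce convergence of the state to the decay of an infinite product of slice norms, and then to verify that this product vanishes under each of the three hypotheses. Since $\mb{x}_k=\mb{P}_{k-1}\cdots\mb{P}_0\,\mb{x}_0$, it suffices to show that the infinity norm of the matrix product tends to zero for an arbitrary $\mb{x}_0$. The key observation is that bounding each factor $\|\mb{P}_l\|_\infty$ separately is useless, since stochastic $\mb{P}_l$ have unit norm; so I would first regroup the product into slices via Eq.~\eqref{mp}, $\prod_k\mb{P}_k=\prod_t M_t$, and only afterwards apply submultiplicativity to the slices, $\|\prod_t M_t\|_\infty\le\prod_t\|M_t\|_\infty$, where now every factor is strictly below one. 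Because each slice has finite length, letting $k\ra\infty$ drives the number of completed slices to infinity, and any in-progress partial slice contributes a factor of infinity-norm at most one (its rows sum to at most one); hence it is enough to control the product over completed slices.

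Combining the length-dependent estimate Eq.~\eqref{43}, $\|M_j\|_\infty\le1-\beta_1^{|M_j|-1}\beta_2$, with $1-a\le e^{-a}$ gives
\begin{equation}
\prod_{t=1}^{T}\|M_t\|_\infty\le\prod_{t=1}^{T}\left(1-\beta_1^{|M_t|-1}\beta_2\right)\le\exp\left(-\beta_2\sum_{t=1}^{T}\beta_1^{|M_t|-1}\right),
\end{equation}
so the entire problem reduces to proving that $\sum_t\beta_1^{|M_t|-1}$ diverges; this immediately yields $\|\mb{x}_k\|_\infty\ra0$, i.e.\ $\mb{x}_k\ra\mb{0}_N$. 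The three cases differ only in how they control the slice lengths so as to guarantee this divergence.

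Under hypothesis (i) every slice obeys $|M_j|\le L$, hence $\beta_1^{|M_j|-1}\ge\beta_1^{L-1}>0$; since covering an infinite matrix sequence by bounded-length slices forces infinitely many slices, the series dominates $\sum_j\beta_1^{L-1}=\infty$. Hypothesis (ii) is the same argument restricted to the infinite subfamily $J_1$ whose lengths are bounded by $L_1$, with the remaining finite-length slices contributing only nonnegative terms. The delicate case is (iii): writing $M_{j_i}$ for the $i$-th slice of the infinite family $J_2$ and exponentiating its length bound (recall $\ln\beta_1<0$, which reverses the inequality) yields $\beta_1^{|M_{j_i}|-1}\ge(1-e^{-\gamma_2 i^{-\gamma_1}})/(1-\beta_2)$; the local estimate $1-e^{-x}\ge x/2$ for small $x$ then gives $\beta_1^{|M_{j_i}|-1}\gtrsim i^{-\gamma_1}$ for large $i$, so that $\sum_i\beta_1^{|M_{j_i}|-1}$ dominates a constant multiple of the $p$-series $\sum_i i^{-\gamma_1}$, which diverges precisely because $\gamma_1\in[0,1]$.

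I expect the main obstacle to be case (iii): unwinding the awkward logarithmic length bound into the clean lower bound on $\beta_1^{|M_{j_i}|-1}$, and identifying the constraint $\gamma_1\le1$ as exactly the divergence threshold of the associated $p$-series---this is where the intuition ``slice lengths may grow, but no faster than a logarithmic rate'' is made rigorous. A minor but necessary point throughout is to confirm that each hypothesis furnishes infinitely many well-defined, finite-length slices, so that the controlling product is genuinely an infinite product of factors strictly less than one.
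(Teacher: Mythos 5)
Your proposal is correct and follows essentially the same route as the paper: regroup the matrix product into slices via Eq.~\eqref{mp}, apply submultiplicativity of the infinity norm only at the slice level, invoke the length-dependent bound Eq.~\eqref{43}, and show the resulting infinite product of subunit factors vanishes under each hypothesis---which is precisely the argument the paper sketches and defers to its references for case (iii). Your unwinding of Eq.~\eqref{growth} (exponentiating the length bound, noting $\ln\beta_1<0$, and comparing with the divergent $p$-series $\sum_i i^{-\gamma_1}$ for $\gamma_1\in[0,1]$) is exactly the computation that the form of that bound is designed to enable, and your explicit handling of the trailing partial slice is a welcome point of rigor the paper glosses over.
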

We now briefly discuss the intuition behind the above theorem. According to Eq.~\eqref{mp}, we can use the product of slices instead of the product of system matrices, and study the following dynamics rather than Eq.~\eqref{14}: ${\bf{y}}({t+1})={\bf{M}}_t {\bf{y}}({t})$. By taking the infinity norm of the both sides of this equation and using the sub-multiplicative norm property we get
\begin{align}\label{55}
{\Vert{\bf{y}}(t+1)\Vert}_\infty &\leq {\Vert {{M}}_{t} \Vert}_{\infty}  \ldots {\Vert {{M}}_{0} \Vert}_{\infty} {\Vert {\bf{y}}(0) \Vert}_{\infty}.
\end{align}

The first two cases are trivial, because if all or an infinite subset of slices have a bounded length, then according to Eq.~\eqref{43} each slice will have a bounded (and subunit) infinity norm, whose infinite product is zero. Thus, from Eq.~\eqref{55} we can infer that ${\Vert{\bf{y}}(t+1)\Vert}_\infty=0$, which in turn leads to~$\lim_{k\ra\infty}\mb{x}_k = \mb{0}_N$,
and completes the proof. However, we can show that a strict upper bound on the lengths of all or an infinite subset of slices is not necessary. In fact, all we require is an infinite subset of slices whose (unbounded) lengths grow slower than the exponential rate provided in Eq.~\eqref{growth}. Later in this section, we explain how the slice lengths can be interpreted in the context of inter-network communications. The reader is referred to our prior works,~\cite{DBLP:journals/corr/SafaviK14,7526779}, for a detailed proof of Theorem~\ref{thm0}. 
In what follows, we use the results of this theorem to study the convergence of Eq. \eqref{eq1}.

\subsection{Convergence of the localization algorithm}\label{4b}
We start this section with the following lemma:
\begin{lem}\label{lem5}
	Under Assumptions {\bf{A0-A3}} and no noise, 
	the product of system matrices,~$\mb{P}_k$'s, in 
	the LTV system represented by Eq.~\eqref{eq1}, converges to zero if any one of the three conditions in Theorem~\ref{thm0} holds.
\end{lem}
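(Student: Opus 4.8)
The plan is to prove Lemma~\ref{lem5} by reduction to Theorem~\ref{thm0}, treating the statement as a verification that the system matrices of Eq.~\eqref{eq1} fall within the class covered by that theorem. Note first that the conclusion is equivalent to Theorem~\ref{thm0}: the homogeneous iteration $\mb{x}_{k+1}=\mb{P}_k\mb{x}_k$ has solution $\mb{x}_k=\big(\prod_{t=0}^{k-1}\mb{P}_t\big)\mb{x}_0$ for every $\mb{x}_0$, so ``the product of the $\mb{P}_k$'s converges to zero'' is precisely $\lim_{k\to\infty}\mb{x}_k=\mb{0}_N$. It therefore suffices to check, under {\bf A0}--{\bf A3} and no noise, that each $\mb{P}_k$ is non-negative, encodes at most one updating row, switches only between stochastic and sub-stochastic forms, and obeys {\bf B0}--{\bf B1} with constants uniform in $k$. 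Then any one of the three slice-length conditions of Theorem~\ref{thm0} carries over verbatim.

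The structural hypotheses and {\bf B0} are the routine part. With no noise the barycentric weights are computed from the true distances, so every $a_k^{ij}>0$ and the three-case classification of Section~\ref{sec3} holds exactly: $\mb{P}_k$ is the identity (no update), the identity with one stochastic row (update without a beacon), or the identity with one sub-stochastic row (update involving a beacon). Hence $\mb{P}_k$ is non-negative and, under the standing convention that at most one robot updates per iteration, has at most one non-identity row, matching the ``at most one state update'' requirement of Theorem~\ref{thm0}. For {\bf B0}, when the updating $i$-th row is stochastic the self-weight equals $\alpha_k$, which by Eqs.~\eqref{alpk} and~\eqref{B0eq} satisfies $\alpha_k\ge\beta$ (identity rows have self-weight $1$); so {\bf B0} holds with $\beta_1=\beta$.

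The crux is {\bf B1}, i.e.\ exhibiting a uniform $\beta_2<1$ bounding the sub-stochastic row sums. For an updating row in which a beacon participates, summing the robot entries and using that the barycentric coordinates over $\Theta_i(k)$ sum to one gives
\begin{equation*}
\sum_{j\in\Omega}(\mb{P}_k)_{ij}=\alpha_k+(1-\alpha_k)\!\!\sum_{j\in\Theta_i(k)\cap\Omega}\!\!a_k^{ij}=1-(1-\alpha_k)\!\!\sum_{m\in\Theta_i(k)\cap\kappa}\!\!b_k^{im}.
\end{equation*}
Assumption~{\bf A3} forces the beacon weight $\sum_m b_k^{im}\ge\alpha>0$, so this row sum is at most $1-(1-\alpha_k)\alpha$. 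I expect the main obstacle to be that this estimate is uniform in $k$ only when $1-\alpha_k$ is bounded away from zero, i.e.\ $\alpha_k\le\bar\alpha<1$; this is the one point at which the design range $\alpha_k\in[\beta,1)$ of Eq.~\eqref{alpk} must be read as furnishing such a gap, and it is exactly where the geometric ``guaranteed beacon contribution'' {\bf A3} is indispensable, since without a floor on the beacon weight (and a ceiling on $\alpha_k$) the sub-stochastic rows could tend to stochastic ones and the per-slice estimate~\eqref{43} would degenerate. Granting this, $\sum_{j\in\Omega}(\mb{P}_k)_{ij}\le 1-(1-\bar\alpha)\alpha=:\beta_2<1$ establishes {\bf B1}, and with $\beta_1=\beta$ Theorem~\ref{thm0} applies directly, so under any of its three conditions $\prod_k\mb{P}_k\to\mb{0}_N$, completing the proof.
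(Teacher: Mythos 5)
Your proof is correct and follows the same strategy as the paper's own: verify that Assumptions {\bf B0}--{\bf B1} follow from {\bf A0}--{\bf A3} together with the structural classification of the~$\mb{P}_k$'s from Section~\ref{sec3}, then invoke Theorem~\ref{thm0}. The one place where you genuinely diverge is the bound for {\bf B1}, and there your version is the more careful one. The paper argues via Eqs.~\eqref{30}--\eqref{31} that $\sum_{j\in\Theta_i(k)\cap\Omega}p_k^{ij}=1-\sum_{m\in\Theta_i(k)\cap\kappa}b_k^{im}\leq 1-\alpha$ and takes~$\beta_2=1-\alpha$, implicitly identifying the barycentric coordinates with the entries of the updating row of~$\mb{P}_k$. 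But by Eq.~\eqref{20} the actual entries of that row are~$\alpha_k$ on the diagonal and~$(1-\alpha_k)p_k^{ij}$ off the diagonal, so the true row sum is $1-(1-\alpha_k)\sum_{m\in\Theta_i(k)\cap\kappa}b_k^{im}\leq 1-(1-\alpha_k)\alpha$, exactly as you compute; the paper's bound~$1-\alpha$ corresponds to~$\alpha_k=0$, which is excluded by Eq.~\eqref{B0eq}. You are also right that obtaining a uniform~$\beta_2<1$ then requires~$\alpha_k\leq\bar\alpha<1$: the design range~$\alpha_k\in[\beta,1)$ in Eq.~\eqref{alpk} does not by itself supply such a gap, although in practice~$\alpha_k$ is a designer-chosen constant (e.g.,~$\alpha_k=\beta$ in the paper's simulations), so the extra requirement is harmless. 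In short, your argument reaches the same conclusion by the same route, but with~$\beta_2=1-(1-\bar\alpha)\alpha$ in place of the paper's~$1-\alpha$, which is the bound one actually needs for Eq.~\eqref{43} and Theorem~\ref{thm0} to apply uniformly in~$k$.
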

\begin{proof}
	We need to show that~{\bf B0} and {\bf B1} can be inferred from Assumptions~{\bf A0-A3}. 
	First note that Eq.~\eqref{B0eq} results in~{\bf B0} if we set $\beta_1=\beta$. On the other hand, if Assumption~{\bf A3} holds, i.e., if there exist at least one beacon in the triangulation set,
	we can write
	\begin{eqnarray}\label{30}
	\sum_{j\in\Theta_i(k)\cap\Omega}\textcolor{black}{p_k^{ij}} = 1-\sum_{m\in\Theta_i(k)\cap\kappa}\textcolor{black}{b_k^{im}},
	\end{eqnarray}
	in which we used the fact that barycentric coordinates sum to one.
	Note that the second term on the right hand side (RHS) of Eq.~\eqref{30} gives the sum of all weights assigned to the beacon(s) in the triangulation set. This term is minimized, and hence the RHS of Eq.~\eqref{30} is maximized, when there is only one beacon among the neighbors of the updating robot, and the minimum weight, $\alpha$, is assigned to this beacon according to Eq.~\eqref{23}. In this case we can write Eq.~\eqref{30} as
	\begin{eqnarray}\label{31}
	\sum_{j\in\Theta_i(k)\cap\Omega}\textcolor{black}{p_k^{ij}} \leq 1-\alpha <1,
	\end{eqnarray}
	which provides an upper bound on the~$i$-th row sum of~${\bf{P}}_{{k}}$, and results in Assumption~{\bf B1} if we choose~$\beta_2=1-\alpha$. With Assumptions~{\bf B0-B1} satisfied, the rest of the proof follows from that of Theorem~\ref{thm0}.
\end{proof}

We now provide our main result in the following theorem.

\begin{thm}\label{th2}
	\textcolor{black}{Consider a network of $M$ (possibly mobile) beacons and $N$ mobile robots moving in
	a finite and bounded region. Then, under Assumptions {\bf A0-A3} and no noise, for any (random or deterministic)
	motion that satisfies one of the conditions in Theorem~\ref{thm0},
 the solution of Eq.~\eqref{eq1} asymptotically converges to the true robot locations.} 
\end{thm}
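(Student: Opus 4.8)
The plan is to reduce Theorem~\ref{th2} to the homogeneous convergence result already established in Lemma~\ref{lem5} (and hence Theorem~\ref{thm0}) by showing that the \emph{true} robot trajectory is itself a consistent solution of the recursion~\eqref{eq1}, so that the estimation error obeys a purely multiplicative dynamics driven by the product of the~$\mb{P}_k$'s. Concretely, I would introduce the error~$\mb{e}_k \triangleq \mb{x}_k - \mb{x}_k^\ast \in \mbb{R}^{N\times m}$ and aim to derive~$\mb{e}_{k+1} = \mb{P}_k \mb{e}_k$, after which the conclusion is immediate from the previously proven results.

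First I would verify that the stacked true locations~$\mb{x}_k^\ast$ satisfy the same update as the estimates. Plugging the true positions into the right-hand side of the per-robot update~\eqref{18} and invoking the barycentric identity~$\sum_{j\in\Theta_i(k)} a_k^{ij}\,\mb{x}_k^{j\ast} = \mb{x}_k^{i\ast}$ (which holds because, in the noiseless case, the coefficients~$a_k^{ij}$ are computed from the true inter-node distances and therefore reconstruct the true position) gives
\begin{equation}
\alpha_k \mb{x}_k^{i\ast} + (1-\alpha_k)\sum_{j\in\Theta_i(k)} a_k^{ij}\mb{x}_k^{j\ast} + \widetilde{\mb{x}}_{k+1}^i = \mb{x}_k^{i\ast} + \widetilde{\mb{x}}_{k+1}^i = \mb{x}_{k+1}^{i\ast},
\end{equation}
where the last equality is the motion model~\eqref{Eq1}; for the non-updating rows the identity is trivial. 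Splitting the coefficients over robots and beacons exactly as in~\eqref{20}, and using that the beacon coordinates~$\mb{u}_k$ are known perfectly (Assumption~\textbf{A0}, no noise), this reads in matrix form as~$\mb{x}_{k+1}^\ast = \mb{P}_k \mb{x}_k^\ast + \mb{B}_k \mb{u}_k + \widetilde{\mb{x}}_{k+1}$.

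Subtracting this relation from~\eqref{eq1} then cancels the input term~$\mb{B}_k \mb{u}_k$ and the motion term~$\widetilde{\mb{x}}_{k+1}$ (both common and noise-free), leaving the homogeneous recursion~$\mb{e}_{k+1} = \mb{P}_k \mb{e}_k$, i.e.,~$\mb{e}_k = \mb{P}_{k-1}\cdots\mb{P}_0\,\mb{e}_0$. This is precisely the system~\eqref{14} of Theorem~\ref{thm0}, and Lemma~\ref{lem5} already certifies that under Assumptions~\textbf{A0--A3} the matrices~$\mb{P}_k$ meet Assumptions~\textbf{B0--B1}; hence, whenever the motion induces a slice structure satisfying one of the three conditions of Theorem~\ref{thm0}, the product~$\prod_k \mb{P}_k$ converges to zero (applied column-wise over the~$m$ coordinates). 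Therefore~$\mb{e}_k \ra \mb{0}_{N\times m}$ for any initialization~$\mb{e}_0$, i.e.,~$\mb{x}_k \ra \mb{x}_k^\ast$, which is the claim.

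The step I expect to carry the real weight is the consistency argument of the second paragraph: one must check carefully that the \emph{same} barycentric coordinates~$a_k^{ij}$ appearing in the estimate update~\eqref{18} also exactly reconstruct the true location~$\mb{x}_k^{i\ast}$. This is what makes the motion and beacon terms cancel identically rather than merely approximately, and it hinges squarely on the noiseless assumption (so that distances, and thus the Cayley--Menger-derived coefficients, are exact) together with the strict-interior inclusion test guaranteeing~$a_k^{ij}>0$. Once this self-consistency is nailed down, convergence is a direct consequence of the already-proven product-of-matrices result, and the hypothesis that the motion satisfies one of the conditions in Theorem~\ref{thm0} serves only to guarantee that the opportunistic updates generate slices whose lengths grow no faster than the admissible exponential rate.
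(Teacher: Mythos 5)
Your proposal is correct and follows essentially the same route as the paper's own proof: establish that the true locations satisfy the same recursion $\mb{x}_{k+1}^\ast = \mb{P}_k\mb{x}_k^\ast + \mb{B}_k\mb{u}_k + \widetilde{\mb{x}}_{k+1}$ (via the convex/barycentric representation of each updating robot's true position), subtract to obtain the homogeneous error dynamics $\mb{e}_{k+1} = \mb{P}_k\mb{e}_k$, and invoke Lemma~\ref{lem5} to conclude the product of the $\mb{P}_k$'s vanishes. Your explicit verification of the barycentric consistency identity is a slightly more detailed write-up of the step the paper only states in words, but the argument is the same.
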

\begin{proof}
	The  motion of the robots in a finite,
	bounded region results in the following LTV system:
	\begin{eqnarray}\label{eq1j}
	{\bf{x}}_{{k+1}}={\bf{P}}_{{k}}{{\bf{x}}_{{k}}}+{\bf{B}}_{{k}}{\bf{u}}_{k}+\widetilde{{\bf{x}}}_{{k+1}},\qquad k>0,
	\end{eqnarray}
	in which the system matrices, ${\bf{P}}_{{k}}$'s switch between stochastic and sub-stochastic matrices. Therefore, under Assumptions {\bf A0-A3} and according to Lemma~\ref{lem5}, \textcolor{black}{if any of the conditions in Theorem~\ref{thm0} is satisfied}, we have
	\begin{equation}\label{29}
	\lim_{k \rightarrow \infty} \prod_{l=0}^{k} {\bf{P}}_{l}=\mb{0}_{N\times N}.
	\end{equation}  
	On the other hand, the true robot locations are given by:
	\begin{eqnarray}\label{eq2}
	{\bf{x}}^{*}_{{k+1}}={\bf{P}}_{{k}}{{\bf{x}}^{*}_{{k}}}+{\bf{B}}_{{k}}{\bf{u}}_k+\widetilde{{\bf{x}}}_{{k+1}}.
	\end{eqnarray}
	To justify the above equation, note that if robot $i$ lies inside the convex hull of $m+1$ neighbors at time $k$, its true location can be expressed as a convex combination of the \textit{true} locations of its neighbors. \textcolor{black}{Note that when the inclusion test is not passed at any robot at time $k$, the system matrix, ${\bf{P}}_{{k}}$, and the input matrix,~${\bf{B}}_{{k}}$, become identity and zero matrices, respectively, and the location estimate,~${\bf{x}}_{{k+1}}$, and the true locations,~${\bf{x}}^{*}_{{k+1}}$ in Eqs.~\eqref{eq1j} and \eqref{eq2} update according to the motion vector.} By subtracting Eq.~\eqref{eq1j} from Eq.~\eqref{eq2} the error dynamics can be obtained as follows
	\begin{eqnarray}\label{eq3}
	\mb{e}_{k+1}\triangleq{\bf{x}}^{*}_{{k+1}}-{\bf{x}}_{{k+1}}={\bf{P}}_{{k}} ({\bf{x}}^{*}_{{k}}-{\bf{x}}_{{k}})={\bf{P}}_{{k}}\mb{e}_{{k}},
	\end{eqnarray}
	which converges to zero from Lemma~\ref{lem5} and Eq.~\eqref{29}, and the proof is complete.
\end{proof}

{\vspace{-4mm}\subsection{Discussion}\label{4c}}
\textcolor{black}{In what follows, we shed some light on Theorem~\ref{thm0}, and elaborate on the choice of design parameters, $\alpha$ and $\alpha_k$.}

(i) As mentioned earlier in Section~\ref{4a}, each slice is initiated with a sub-stochastic update, which only occurs when a beacon is among the neighbors of the updating robot. In other words, to initiate a slice a robot with unknown location must have a beacon in its triangulation set. Thus, the first system matrix of each slice has one sub-stochastic, and $N-1$ stochastic rows. We call a robot \emph{informed} after it updates with a beacon. Therefore, at the beginning of a slice there are one informed and $N-1$ \emph{uninformed} robots in the network.

(ii) A slice is terminated after all rows of a slice are sub-stochastic, i.e., all robots are informed. For this to happen, each robot has to either update with a beacon directly, or indirectly, i.e., update with a robot who updated with a beacon in the same slice. By \emph{indirectly}, we mean that a robot receives information not from a beacon but by any other informed~robot in the network.
%
Completion of each slice corresponds to~the propagation of location information from beacon(s) to all robots in the network, and the location estimates are refined each and every time a slice is completed. Hence, slice representation is closely related to the information flow from the beacons to every other robot in the network. 

(iii) Theorem~\ref{thm0} provides the conditions on the rate,
at which such information should propagate for convergence to the true robot locations. The first two cases in Theorem~\ref{thm0} require all or an infinite subset of slices to have bounded lengths. In other words, the information from the beacons has to reach all robots within $L$ or $L_1$ iterations, infinitely often. These conditions are relaxed in the third case; Eq.~\eqref{growth} implies that all we need is an infinite subset of slices whose lengths grow slower than a certain exponential rate. The condition on motion described in Theorem~\ref{th2} can be translated to a condition on the information dissemination from Theorem~\ref{thm0}. \textcolor{black}{Thus, any motion that guarantees this information dissemination suffices.} 

\textcolor{black}{(iv) The non-zero self-weights, $\alpha_k$'s, assigned to the previous state of the updating robot guarantees that an informed robot does not become uninformed again within the same slice, e.g., by performing an update with a set of uninformed robots before the slice is complete.} 

\textcolor{black}{(v) Although the proposed algorithm converges for any value of~$0< \alpha <1$, the convergence rate of the algorithm is affected by the choice of~$\alpha$: By choosing~$\alpha$ arbitrarily close to $1$, a robot has to get arbitrarily close to a beacon in order to perform an update with respect to that beacon (see Fig.~\ref{alpha}), which in turn corresponds to arbitrary large number of iterations for the termination of each slice. On the other hand, setting~$\alpha$ arbitrarily close to zero makes slice norms arbitrarily close to $1$. 
A proper choice can be made by considering the motion model, the communication protocol, and the number of available beacons in the network.}

\vspace{3mm}

\section{Localization under imperfect measurements}\label{noise}
The noise on the motion and distance measurements degrades the performance of the localization algorithm, as expected, and in certain cases the location error is larger than the region of motion; this is shown experimentally in Section~\ref{sec7}. In what follows, we provide the modifications, {\bf{M1-M3}}, to the proposed algorithm to counter the undesirable effects of noise in case of motion,~${\widehat{\mb{x}}_{k}^i}$, and on the distance measurements,~$\widehat{d}^{ij}_k$. 

\vspace{-5mm}

\textcolor{black}{\subsection{Discard unreliable Cayley-Menger determinants}
If a robot is located close to the boundaries of a convex hull, the noise on distance measurements may affect the inclusion test results. 
To get meaningful values for the areas and volumes in~$\mathbb{R}^2$ and $\mathbb{R}^3$ according to Eq.~\eqref{cmeq},
the corresponding Cayley-Menger determinants computed with perfect distance measurements must obey a certain sign in order for the square root to convey a meaningful result (negative in $\mbb{R}^2$ and positive in $\mbb{R}^3$). Therefore, we suggest the first modification to the algorithm as follows:}

\textcolor{black}{{\bf{M1:}} A robot does not perform an inclusion test if the corresponding Cayley-Menger determinant is positive in $\mathbb{R}^2$, or negative in $\mathbb{R}^3$.}

\subsection{Inclusion test error}
Even in the case of perfect distance measurements, the inclusion test results may not be accurate due to the noise on the motion, which in turn corresponds to imperfect location updates \textit{at each and every iteration}. To address this issue, we propose the following modification to the algorithm:
	
	{\bf{M2:}} Suppose the inclusion test is passed at time $k$ by a triangulation set, $\Theta_i(k)$. Robot $i$ performs an update only if 
	\begin{eqnarray}\label{iterror}
	\textcolor{black}{\epsilon_k^{i}=\left\vert\frac{\sum_{j\in\Theta_i(k)}A_{\Theta_i(k)\cup\{i\}\setminus j} - A_{\Theta_i(k)}}{A_{\Theta_i(k)}}\right\vert<\epsilon,}
	\end{eqnarray}
	where~$\epsilon_k^{i}$ is the \textit{relative inclusion test error} at time~$k$ for robot~$i$, and $\epsilon$ is a design parameter. \textcolor{black}{The optimal choice of $\epsilon$ depends on the number of robots and beacons in the network and the statistics of the noise.} 	
\vspace{-2mm}	
	
	\subsection{Convexity}
	Finally, in order to guarantee the convexity in the updates \textcolor{black}{in the presence of noise}, we consider the following modification:
	
	{\bf{M3:}} Suppose the inclusion test is passed by a triangulation set, $\Theta_i(k)=\{j,~l,~m\}$, and Eq.~\eqref{iterror} holds at time $k$. \textcolor{black}{Robot~$i$ first computes, $a_{k}^{{i}j}$, $a_{k}^{{i}l}$, and $a_{k}^{{i}m}$, according to Eq.~\eqref{eq7} and using the noisy distance measurement. It then normalizes the weights assigned to each neighbor in order to preserve the convexity of the update. For example, the (normalized) weight assigned to robot $j$ is computed as~$a_{k}^{{i}j}/\sum_{n\in{\Theta_i(k)}}a_{k}^{{i}n}$.}
	
	\vspace{2mm}
	
In Section~\ref{sec7}, we show that the above modifications to the algorithm improve the results significantly in presence of noise, and result in a bounded error in location estimates.

\section{How many beacons are necessary?}\label{sec5}
In this section, we investigate the minimal number of beacons required for localizing an arbitrary number of robots using the algorithm described in Section~\ref{sec3}. We denote the subspace of motion at robot, $i \in\Omega$, and beacon, $j \in \kappa$, by~$\mathcal{M}_i$ and $\mathcal{U}_j$, respectively. Let us clarify this notation with a simple example: Suppose robot $1$ is moving along a vertical line \textcolor{black}{(regardless of the direction)}; this line forms~$\mathcal{M}_1$, and $\dim \mc{M}_1 =1$. \textcolor{black}{Note that $\mc{M}_i$ or~$\mathcal{U}_j$ includes all possible locations that the $i$-th robot or the $j$-th beacon occupies throughout the localization process, i.e., discrete times~$k=1,~2,\ldots$}.
Now consider another robot, $2$, which is moving along a vertical line parallel to~$\mathcal{M}_1$ \textcolor{black}{(regardless of the direction)}; in this case we will have~$\dim \cup_{i=1,2} \mathcal{M}_i=\dim\mc{M}_2=1$. However, if the two lines are linearly independent, they span~$\mathbb{R}^{2}$, and we will have~$\dim \cup_{i=1,2} \mathcal{M}_i=2$. 

Assuming~$\mathbb{R}^{m}$,~we show that the motion of the robots and beacons in~$l\leq m$ dimensions allows us to reduce the number of beacons from~$m+1$ by~$l$. \textcolor{black}{Note that the traditional trilateration scheme requires at least~$3$ nodes with known locations in $\mathbb{R}^2$. Therefore, assuming $m+1$ beacons in~$\mathbb{R}^m$ has been standard in all multilateration-based localization algorithms in the literature, see e.g.,~\cite{thomas2005revisiting,navarro1999beacon}.}
%
%
\textcolor{black}{In the following theorem, we develop necessary conditions for the proposed algorithm to track the true location of robots.
Since we can provide robots with up to $m$ degrees of freedom in their motion in~$\mathbb{R}^{m}$, we then show that our localization algorithm works in the presence of only one~($m+1-m$) beacon.} 
\begin{thm}\label{thm3}
For the LTV dynamics in Eq.~\eqref{eq1} to track the true robot locations, \textcolor{black}{following a non-trivial configuration in~$\mathbb{R}^m, m=2$}, the following conditions must be satisfied
\begin{eqnarray}
\vert \kappa \vert &\geq& 1,\label{33}\\
\vert \kappa \vert + \vert \Omega \vert &\geq& m+2,\label{34}\\
\vert \kappa \vert + \dim \underset{i\in\Omega}\cup \mc{M}_i + \dim \underset {j\in\kappa} \cup \mc{U}_j &\geq& m+1 .\label{35}
\end{eqnarray}
\end{thm}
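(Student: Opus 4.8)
The plan is to prove each of Eqs.~\eqref{33}--\eqref{35} as a \emph{necessary} condition by contraposition. Since iterating the error dynamics of Eq.~\eqref{eq3} gives $\mathbf{e}_k=\big(\prod_{l}\mathbf{P}_l\big)\mathbf{e}_0$, the algorithm tracks the true locations for \emph{every} initialization if and only if $\prod_l\mathbf{P}_l\to\mathbf{0}_{N\times N}$. I would therefore show that violating any one of the three inequalities produces a nonzero mode that this product can never annihilate, so tracking fails.

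For Eq.~\eqref{33} I would invoke the case analysis of Section~\ref{sec3}: a sub-stochastic row of $\mathbf{P}_k$ arises only when a beacon lies in the updating robot's triangulation set. With $|\kappa|=0$ every $\mathbf{P}_k$ is stochastic, so $\mathbf{P}_k\mathbf{1}_N=\mathbf{1}_N$ and hence $\big(\prod_l\mathbf{P}_l\big)\mathbf{1}_N=\mathbf{1}_N\neq\mathbf{0}$; the error component along $\mathbf{1}_N$ — a rigid translation of the whole network — is never corrected, which is the familiar statement that absolute position is unobservable from inter-node distances alone. For Eq.~\eqref{34} I would note that an update requires the robot to sit strictly inside the convex hull of a triangulation set of size $m+1$, so at least $m+2$ distinct nodes must exist; if $|\kappa|+|\Omega|<m+2$ no triangulation set is ever formed, every $\mathbf{P}_k=\mathbf{I}_N$ (Case~(i)), and $\mathbf{e}_k=\mathbf{e}_0$ for all $k$. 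Both are short arguments.

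The substantive part is Eq.~\eqref{35}, and the observation I would exploit is that the update depends on the true positions \emph{only through affine-invariant quantities}: by Eq.~\eqref{eq7} each barycentric coordinate is a ratio of volumes, hence invariant under any non-degenerate affine map, and the inclusion test of Section~\ref{ss:DILOC} is affine-invariant as well. I would therefore consider a global affine map $\mathbf{x}\mapsto\mathbf{x}(I+G^{\top})+b$, with $G\in\mathbb{R}^{m\times m}$ and $b$ a row vector, and ask when it is \emph{undetectable}, i.e.\ fixes every beacon at all times and preserves every measured motion. Fixing beacon $m$ for all $k$ forces $\mathbf{u}_k^{m\ast}G^{\top}+b=\mathbf{0}$: one beacon then pins down $b$ (the translation degree of freedom — the ``$+1$'' in $m+1$), while differencing over beacons and over time places the $|\kappa|-1$ beacon-difference directions and the beacon-motion subspace $\cup_j\mathcal{U}_j$ in $\ker G$; compatibility with the measured increments $\widetilde{\mathbf{x}}_{k+1}^i$, which would transform by $I+G^{\top}$, additionally forces the robot-motion subspace $\cup_i\mathcal{M}_i$ into $\ker G$. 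Writing $W$ for the span of all these directions, a nonzero $G$ survives whenever $\dim W<m$, and since $\dim W\le(|\kappa|-1)+\dim\cup_i\mathcal{M}_i+\dim\cup_j\mathcal{U}_j$ this is implied by the negation of Eq.~\eqref{35}. The resulting map sends the true trajectory $\mathbf{x}_k^{\ast}$ to a distinct trajectory $\mathbf{y}_k$ that satisfies the \emph{same} recursion Eq.~\eqref{eq2} with identical $\mathbf{P}_k,\mathbf{B}_k,\mathbf{u}_k,\widetilde{\mathbf{x}}$; hence $\delta\mathbf{x}_k:=\mathbf{y}_k-\mathbf{x}_k^{\ast}$ obeys $\delta\mathbf{x}_{k+1}=\mathbf{P}_k\delta\mathbf{x}_k$ and stays nonzero, so $\prod_l\mathbf{P}_l\not\to\mathbf{0}$ and tracking is impossible. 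Taking the contrapositive yields Eq.~\eqref{35}.

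The main obstacle I anticipate is making the Eq.~\eqref{35} argument fully rigorous in the \emph{time-varying} setting. Two points need care. First, justifying that the governing symmetry group is the \emph{affine} group and not the Euclidean group: although raw pairwise distances do change under a non-isometric affine map, the algorithm never uses them except to form the affine-invariant coordinates of Eq.~\eqref{eq7} and to run the affine-invariant inclusion test, so the metric information beyond the affine structure is effectively discarded — and this is precisely what lets one beacon plus full-dimensional motion substitute for $m+1$ beacons. Second, confirming that the undetectable map genuinely produces a non-vanishing solution of the homogeneous recursion despite $\delta\mathbf{x}_k$ drifting with the motion through the term $\mathbf{x}_k^{\ast}G^{\top}$; I would discharge this by verifying that the beacon-fixing and motion-compatibility conditions make $\mathbf{y}_k$ a legitimate true-location trajectory (so the barycentric identity holds for $\mathbf{y}_k$ at every update, with the beacon contributions unchanged), which yields $\delta\mathbf{x}_k=\big(\prod_l\mathbf{P}_l\big)\delta\mathbf{x}_0$ and contradicts the convergence of Lemma~\ref{lem5}. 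As a consistency check, because $\cup_j\mathcal{U}_j$ is empty when $|\kappa|=0$ and $\dim\cup_i\mathcal{M}_i\le m$, Eq.~\eqref{35} already forces $|\kappa|\ge1$, so it is compatible with Eq.~\eqref{33}.
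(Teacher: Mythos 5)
Your proposal is correct, and for the substantive condition, Eq.~\eqref{35}, it takes a genuinely different route from the paper. Your handling of Eqs.~\eqref{33} and~\eqref{34} coincides with the paper's in substance (the paper merely asserts that all-stochastic products cannot drive the error of Eq.~\eqref{eq3} to zero, and that fewer than $m+2$ nodes means no triangulation set ever forms; your invariant vector $\mathbf{1}_N$ with $\mathbf{P}_k\mathbf{1}_N=\mathbf{1}_N$ makes the first assertion precise). For Eq.~\eqref{35}, however, the paper works only in $\mathbb{R}^2$: it enumerates the failing configurations $(\vert\kappa\vert,\ \mbox{total motion dimension})\in\{(1,0),(1,1),(2,0)\}$ and argues geometrically, via the parallel-lines/strip constructions of Figs.~\ref{f6j}--\ref{f7}, that the robot farthest from the beacon can never pass the inclusion test, so its row of $\mathbf{P}_k$ remains an identity row, a slice never terminates, and the product of system matrices cannot vanish; $\mathbb{R}^3$ is only sketched. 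You instead construct, whenever $\vert\kappa\vert+\dim\cup_{i}\mathcal{M}_i+\dim\cup_{j}\mathcal{U}_j\leq m$, a nonzero affine map $\mathbf{x}\mapsto\mathbf{x}(I+G^{\top})+b$ that fixes every beacon at all times and has every motion direction in $\ker G$; affine invariance of the barycentric coordinates of Eq.~\eqref{eq7} then shows the image trajectory solves the very same recursion Eq.~\eqref{eq2} with the identical $\mathbf{P}_k,\mathbf{B}_k$, so $\delta\mathbf{x}_k$ satisfies the homogeneous dynamics and is in fact a common fixed point of every $\mathbf{P}_k$ (constant in $k$, since the motions lie in $\ker G$), whence $\prod_l\mathbf{P}_l\not\to\mathbf{0}$. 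Your approach buys dimension-independence (it proves the statement for all $m$ at once), eliminates the figure-based reasoning, and exposes the residual affine symmetry that each beacon beyond the first and each motion dimension removes --- one degree of freedom each --- which the paper's counting exhibits but does not explain; the paper's argument, in exchange, stays inside its slice/information-flow machinery from Theorem~\ref{thm0} and identifies concretely which robot fails to triangulate. One step you should state explicitly when writing this up: $\delta\mathbf{x}_0\neq\mathbf{0}$ requires the theorem's non-trivial-configuration hypothesis, because the fixed-point set of your affine map is an affine subspace of dimension at most $m-1$ that is invariant under translations in $\ker G$, so without that hypothesis all robots could remain on it forever and the two trajectories would coincide.
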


\begin{proof}
\textcolor{black}{Eq.~\eqref{33} is trivial, because when there is no beacon in the network, all system matrices become stochastic, and the error dynamics in Eq.~\eqref{eq3} do not converge to zero. \textcolor{black}{This is equivalent to Case (ii) in Section~\ref{sec3}, where the update is always in terms of robots, i.e.,~$\mb{B}_k$ is always zero in Eq.~\eqref{eq1}.} Eq.~\eqref{34} stems from the fact that each robot requires at least~$m+1$ neighbors to perform an update in $\mathbb{R}^m$, \textcolor{black}{assuming non-trivial configurations.} In order to prove the necessary condition in Eq.~\eqref{35}, let us first consider the $2$-dimensional Euclidean space. Suppose on the contrary that Eq.~\eqref{35} does not hold.
In this case, there exist three possible scenarios as follows:
\begin{enumerate}[(i)]
	\item $\vert \kappa \vert=0, \mbox{and}\dim \underset{i\in\Omega}\cup \mc{M}_i=2$; or,
	\textcolor{black}{\item $\vert \kappa \vert=1, \mbox{and}~\dim \underset{i\in\Omega}\cup \mc{M}_i + \dim \underset {j\in\kappa} \cup \mc{U}_j =0$; or,}
	\item $\vert \kappa \vert=1, \mbox{and}~\dim \underset{i\in\Omega}\cup \mc{M}_i + \dim \underset {j\in\kappa} \cup \mc{U}_j =1$; or,
	\item $\vert \kappa \vert=2, \mbox{and}~\dim \underset{i\in\Omega}\cup \mc{M}_i + \dim \underset {j\in\kappa} \cup \mc{U}_j =0$.
\end{enumerate}
Localization is not possible in case (i), as it violates Eq.~\eqref{33}.} 

\textcolor{black}{Case (ii)--with~$1$ beacon,~$3$ robots and no motion, is where at most one robot may be able to lie inside the convex hull of the remaining three; making the triangulation of the second robot impossible. Same argument can be applied for Case (iv)--with~$2$ beacons,~$2$ robots and no motion. Mathematically, these two cases mean that a slice is never completed because the rows of $\mb{P}_k$ in Eq.~\eqref{eq1} corresponding to the non-updating robots will always be the corresponding rows of identity.} Thus, all we need to show is that localization is not possible in case (iii), where there is one beacon in the network, and the dimension of the motion in all nodes is one, i.e., all nodes can only move along parallel lines in $\mathbb{R}^2$.

We start with the best possible scenario, which initially allows one robot to triangulate. 
\begin{figure}[!h]
	\centering
	\includegraphics[width=65mm]{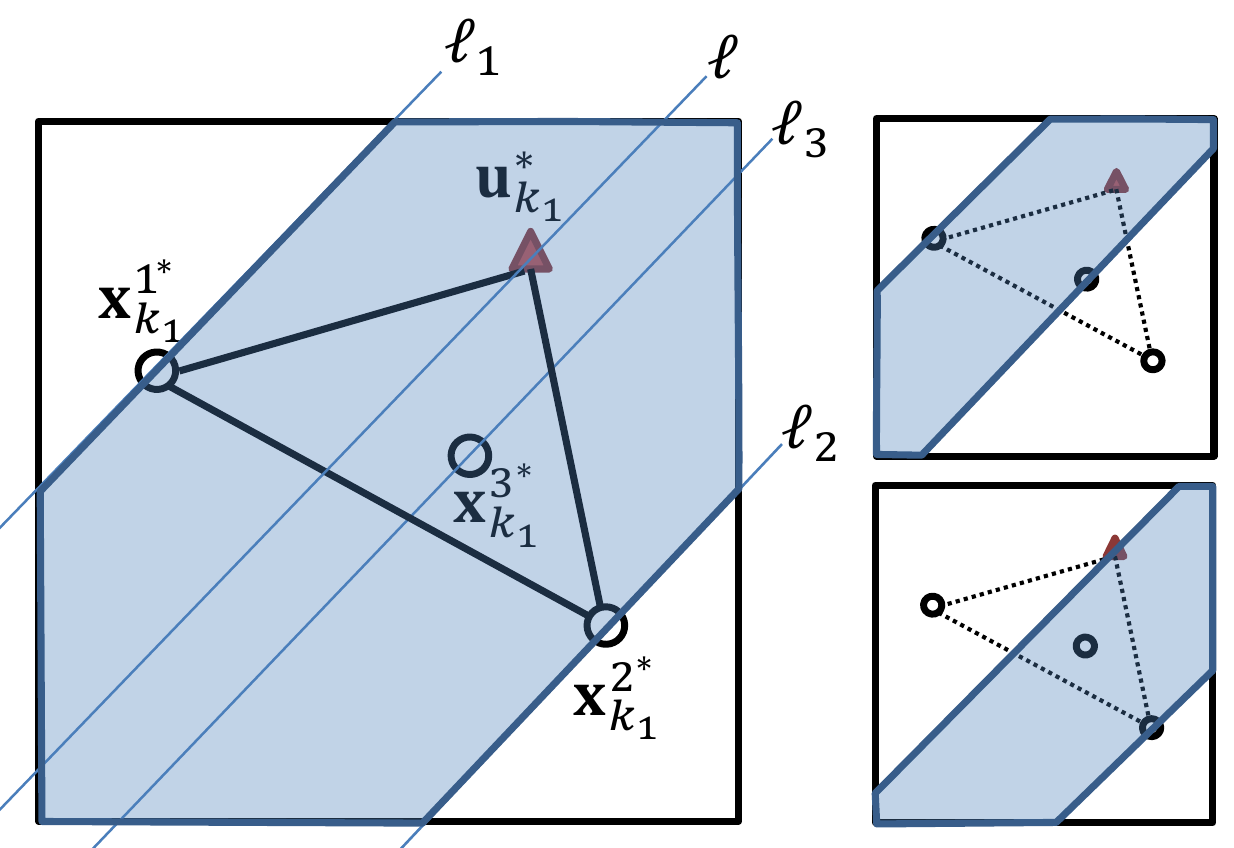}
	\caption{\textcolor{blue}{Only robot $1$ can triangulate in this configuration.
		}}
		\label{f6j}
	\end{figure}
We then choose an arbitrary direction in which all nodes are allowed to move. As shown in Fig.~\ref{f6j} (Left), the beacon and the three robots can move along~$\ell$,~$\ell_1$,~$\ell_2$, and $\ell_3$, respectively.
If this direction is chosen such that the beacon's line of motion lies within the strip between $\ell_1$ and $\ell_2$ (the lines of motion for the two robots that are not initially able to triangulate), no other robot will be able to triangulate. That is because robots $1$ and $2$ have to move inside the shaded regions illustrated in Fig.~\ref{f6j} (Right) in order to (possibly) triangulate. Clearly, any vector that provides such motion is linearly independent of the robots' direction of motion.

\textcolor{black}{We now consider the case where the direction of motion is such that the beacon lies outside the aforementioned strip. This scenario is illustrated in Fig.~\ref{f7}.
	\begin{figure}[!h]
		\centering
		\includegraphics[width=75mm]{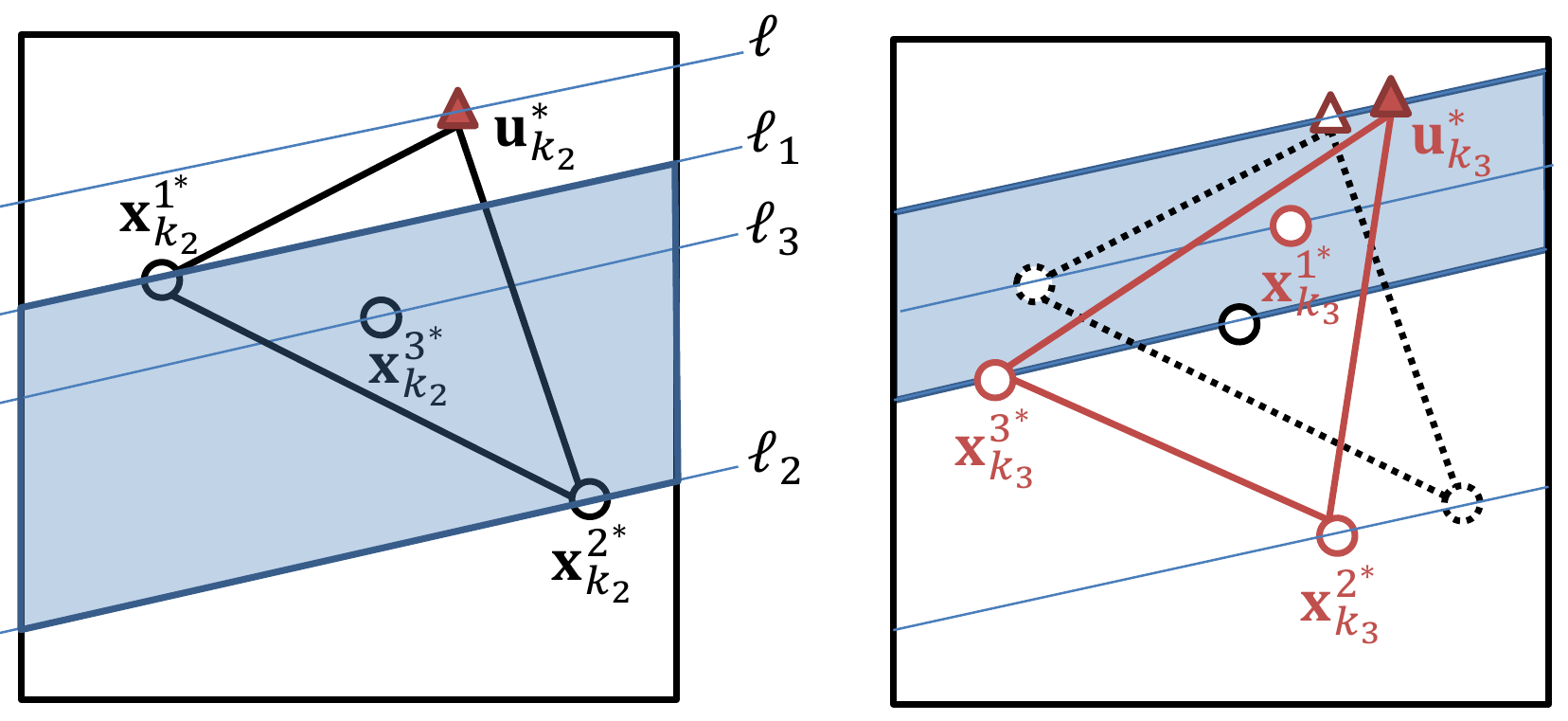}
		\caption{\textcolor{black}{(Left) Robot $3$ triangulates at time $k_2$; (Right) Robot $1$ triangulates at time $k$; Robot $2$ can not triangulate in this configuration.}}
		\label{f7}
	\end{figure}
	At time $k_2$, robot $3$ lies inside the convex hull of the beacon and the other two robots, and is able to triangulate, see Fig.~\ref{f7} (Left). As shown in Fig.~\ref{f7} (Right), the robots can move along the specified parallel lines such that at time $k_3$ robot $1$ lies inside the triangle of the beacon and robots $2$ and $3$. However, the farthest robot from the beacon (robot $2$ in this case) will never be able to triangulate, unless it moves inside the shaded region shown Fig.~\ref{f7} (Right), i.e., the strip between the lines of motion corresponding to robot $3$ and the beacon. This in turn requires a motion vector that is linearly independent of the robots' specified direction of motion, i.e., $\dim \underset{i\in\Omega}\cup \mc{M}_i$ greater than~$1$.}
{At time $k_2$, Fig.~\ref{f7} (Left), robot $3$ lies inside the convex hull of the beacon and the other two robots, and is able to triangulate. Due to the motion, Fig.~\ref{f7} (Right), the robots move along the specified parallel lines such that at time $k_3$, robot $1$ lies inside the triangle of the beacon and robots $2$ and $3$. However, the farthest robot from the beacon (robot $2$ in this case) will never be able to triangulate, unless it moves inside the shaded region shown in Fig.~\ref{f7} (Right), i.e., the strip between the lines of motion corresponding to robot $3$ and the beacon. This in turn requires a motion vector that is linearly independent of the robots' specified direction of motion, i.e., $\dim \underset{i\in\Omega}\cup \mc{M}_i$ greater than~$1$. Mathematically, row~$2$ of $\mb{P}_k$ in Eq.~\eqref{eq1} never changes and a slice never completes.}
\end{proof}

\textcolor{black}{Although Theorem~\ref{thm3} is applicable to~$m=3$, a formal proof is beyond the scope of this paper and will be provided elsewhere.} In the sequel, we only give a sketch of the proof to show that localization is not possible with one beacon and two dimensional motion in~$\mbb{R}^3$. This scenario is shown in Fig.~\ref{f8}, where robot $4$ initially lies inside the convex hull (a tetrahedron in $\mathbb{R}^3$) of the other nodes. 
\begin{figure}[!h]
	\centering
	\includegraphics[width=55mm]{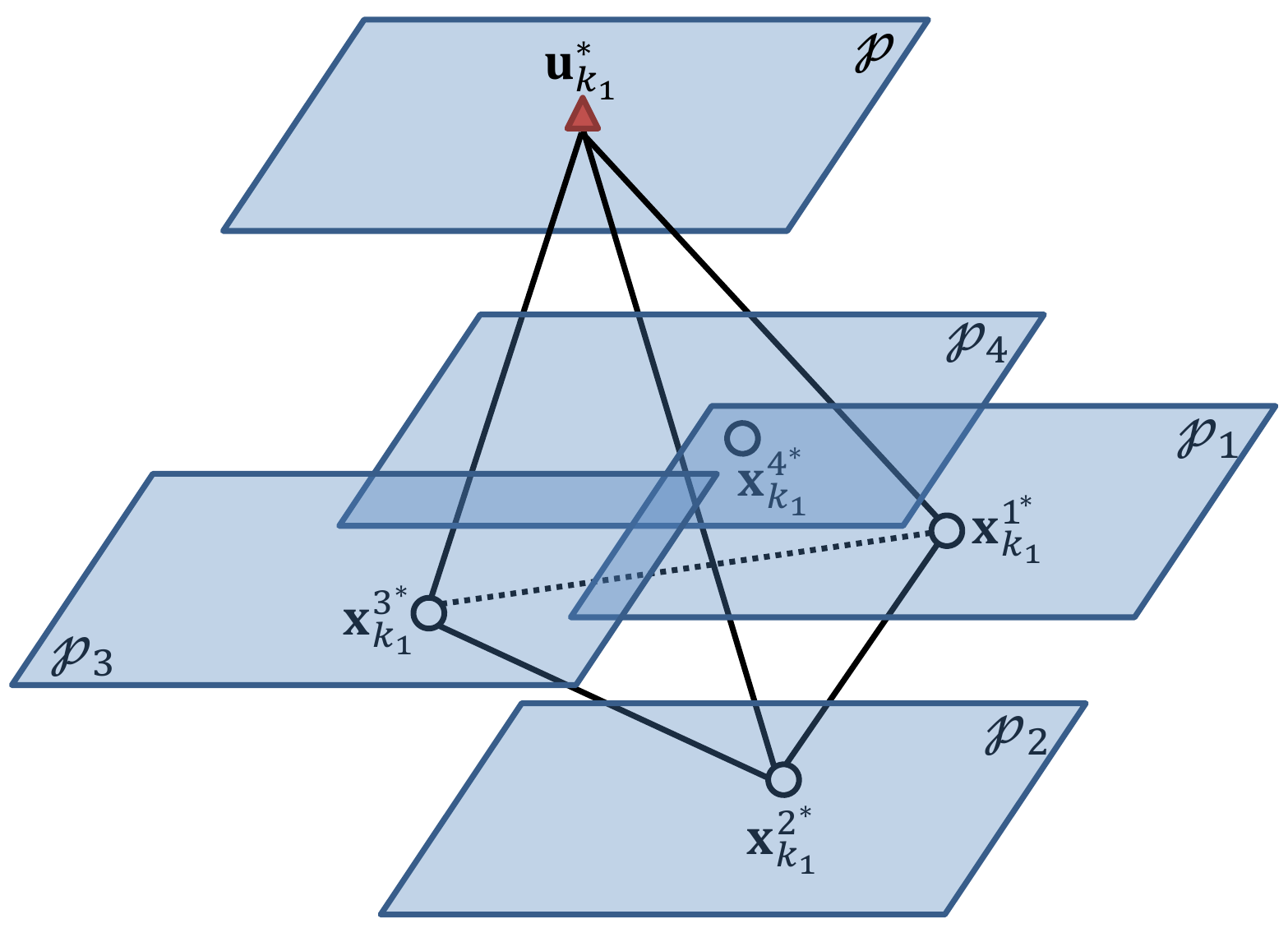}
	\caption{Localization with one beacon and $2$-dimensional motion is not possible in $\mathbb{R}^3$.}
	\label{f8}
\end{figure}
The beacon and robots $1-4$ can move on the planes $\mathcal{P}$, and $\mathcal{P}_1-\mathcal{P}_4$ as shown in Fig.~\ref{f8}. Clearly, the farthest robot from the beacon, robot $2$ in this case, is not able to find a triangulation set, unless it moves into the space between $\mathcal{P}_3$ and $\mathcal{P}$, which in turn requires a motion vector which is not in the span of any pair of linearly independent vectors in the motion planes, and thus makes localization of robot $2$ impossible.

Theorem \ref{thm3} provides necessary conditions for our localization algorithm in terms of the minimal number of beacons and the dimension of motion in the network.  
\begin{figure}[!h]
	\centering
	\includegraphics[width=75mm]{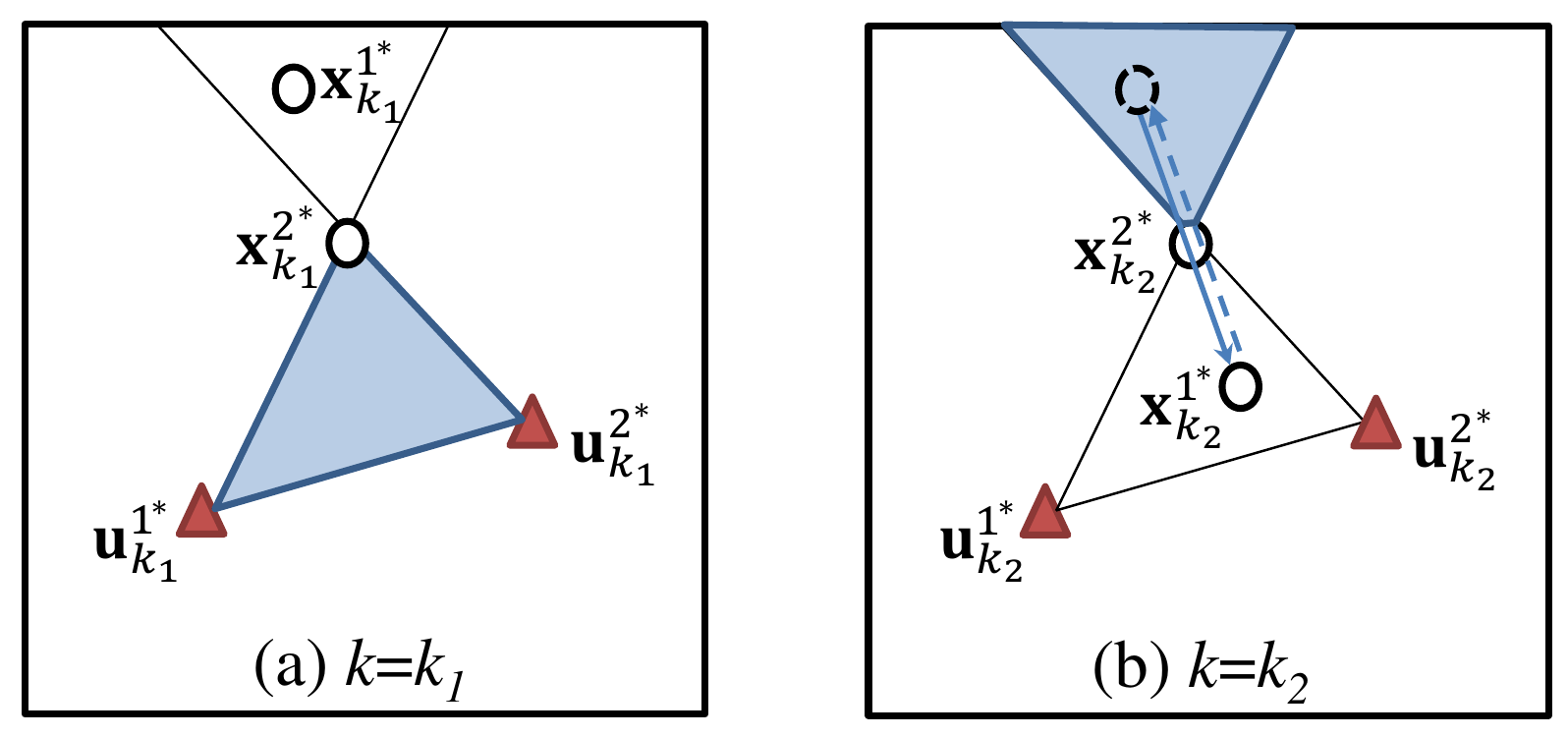}
	\caption{Localization with two beacons and one dimensional motion \textcolor{black}{in $\mathbb{R}^2$.}}
	\label{f4}
\end{figure} 
In what follows, we show how adding a new beacon or a new dimension in the motion make the localization algorithm work in $\mathbb{R}^2$. We start with case (ii) in Theorem~\ref{thm3} and add one more beacon, i.e.,~$\vert \kappa \vert=2, \mbox{and}~\dim \underset{i\in\Omega}\cup \mc{M}_i + \dim \underset {j\in\kappa} \cup \mc{U}_j=1$. This scenario is illustrated in Fig.~\ref{f4}.
Without loss of generality, we assume that only one robot, $1$, is moving from time~$k_1$ to $k_2$. To triangulate, this robot has to lie inside the convex hull of the two beacons and robot $2$, the shaded region in Fig.~\ref{f4} (a). To this aim, one possible motion vector is shown in Fig.~\ref{f4} (b) with a solid vector. Note that in order to let robot $2$ triangulate, robot $1$ can move via the same motion vector (and in the opposite direction) to lie inside the shaded region shown in Fig.~\ref{f4}.

We now revisit case (ii) in Theorem~\ref{thm3} and add one more motion dimension, i.e.,~$\vert \kappa \vert=1, \mbox{and}~\dim \underset{i\in\Omega}\cup \mc{M}_i + \dim \underset {j\in\kappa} \cup \mc{U}_j=2$. 
\begin{figure*}
	\centering
	\subfigure{\includegraphics[width=1.4in,height=1.4in]{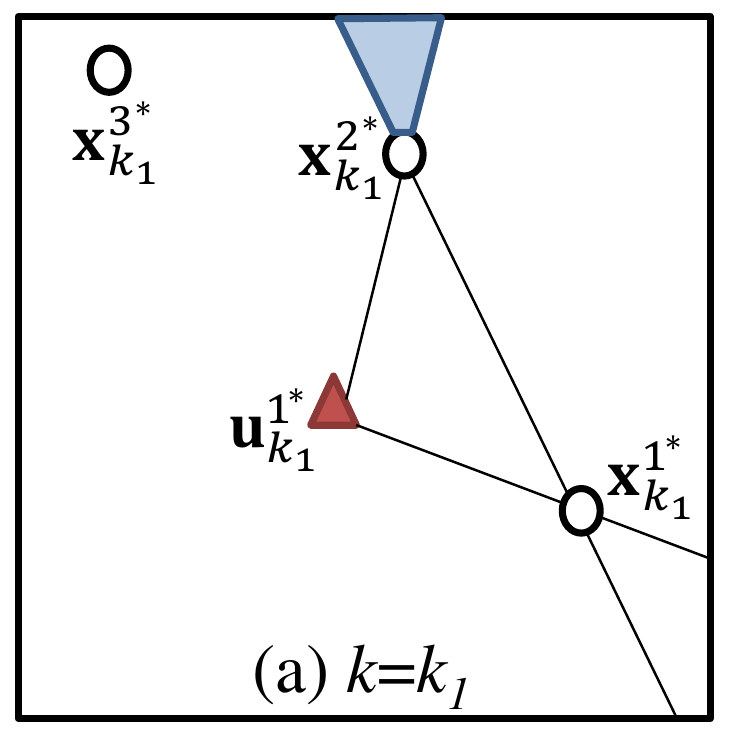}}
	\subfigure{\includegraphics[width=1.4in,height=1.4in]{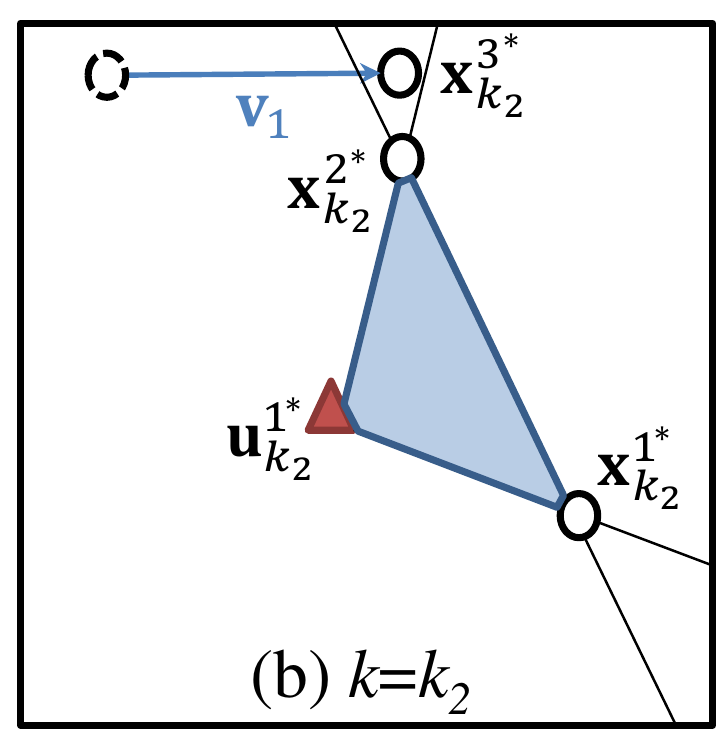}}
	\subfigure{\includegraphics[width=1.4in,height=1.4in]{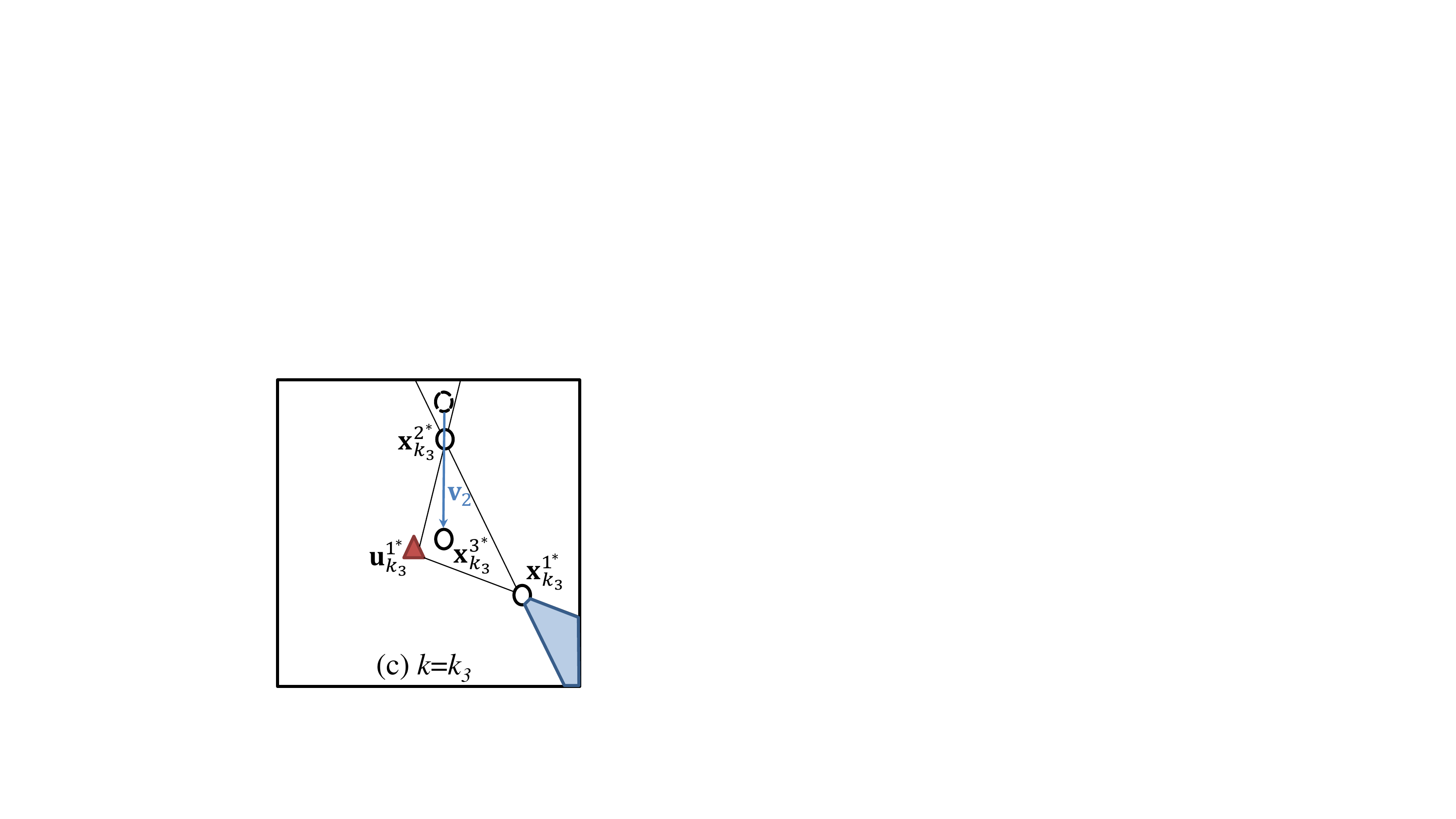}}
	\subfigure{\includegraphics[width=1.4in,height=1.4in]{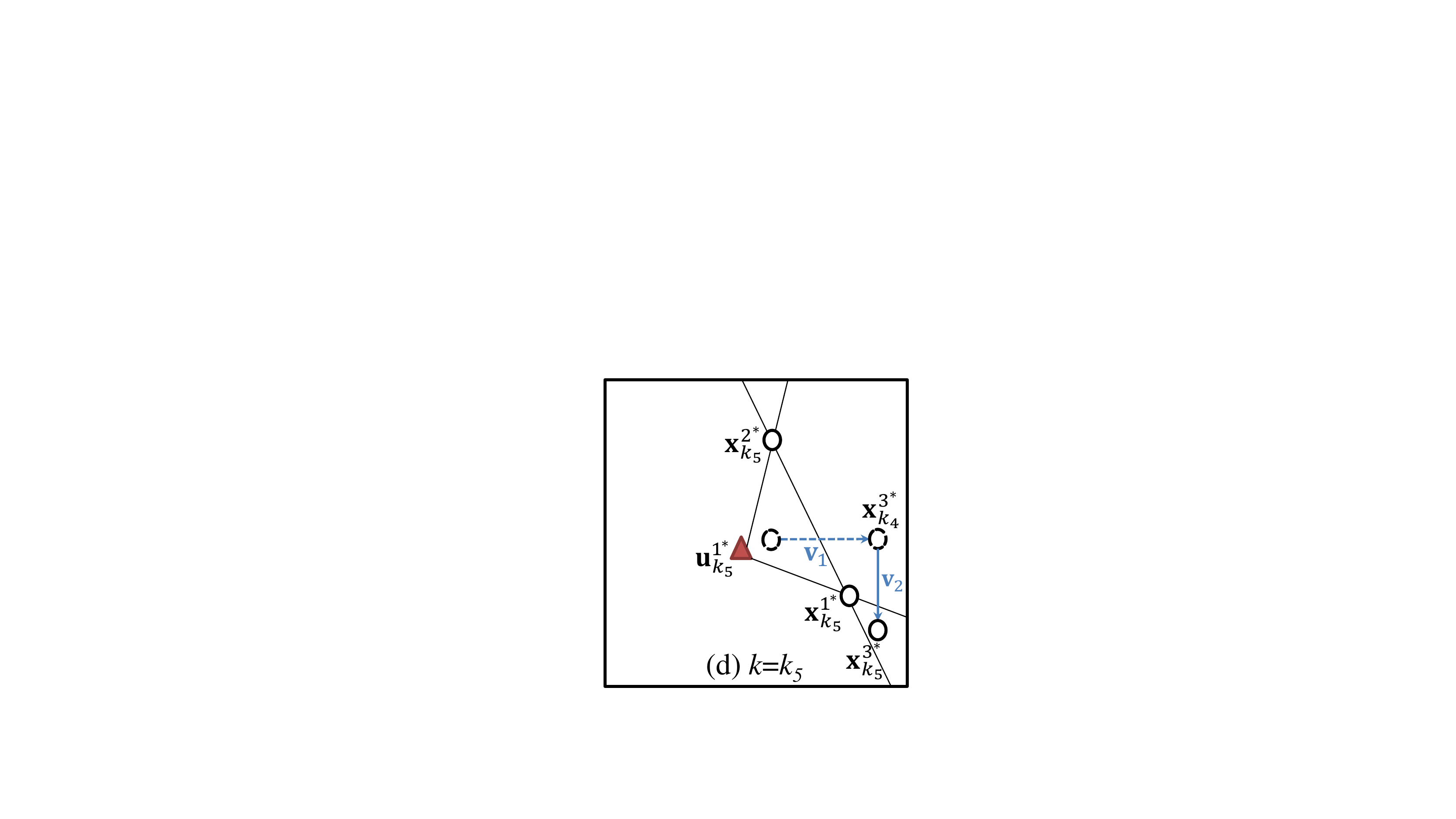}}
	\caption{\textcolor{black}{Localization with one beacon and two dimensional motion; Robot~$2$, Robot~$3$, and Robot~$1$ triangulate at times $k_2$, $k_3$ and $k_5$, respectively.}}
	\label{f5-1}
\end{figure*}
A possible motion trajectory of one robot, $1$, which leads to the triangulation of all robots in a network of size $4$ is shown in Fig.~\ref{f5-1}, assuming that the beacon and all the robots, except, robot $3$, are static. We denote the motion vectors with~${\bf{v}_1}$ and ${\bf{v}_2}$, which are orthogonal in this case.
\textcolor{black}{The shaded regions in Fig.~\ref{f5-1} (a), (b), and (c) represent the locations, where robots~$4$,~$1$, and $3$ can triangulate. For example, Robot $2$ lies inside a triangle of Robots $3$, Robot $1$, and the beacon, only if Robot~$3$ moves to the shaded region in Fig.~\ref{f5-1} (a) between time $k_1$ and $k_2$.}
\textcolor{black}{Note that Fig.~\ref{f5-1} illustrates a scenario, in which one slice is completed, i.e., each robot receives information from the beacon at least once. Clearly, to achieve the convergence to the true locations, such motion has to be repeated infinitely often such that one of the conditions in Theorem~\ref{thm0} is satisfied.}

\section{Simulations}\label{sec7}
In this section, we provide the simulation results to illustrate the proposed localization algorithm in $\mathbb{R}^2$. 
Let the true location of the~$i$-th robot,~$i\in\Omega$ in~$\mbb{R}^2$, be decomposed as~${\mb{x}_k^{i\ast}}=[{x_k^{i\ast}}~{y_k^{i\ast}}]$. We consider the following random motion model for robot~$i \in \Omega$:
\begin{equation}\label{1}
{x_{k+1}^{i\ast}}= {x_{k}^{i\ast}}+d_{k+1}^i\cos(\theta_{k+1}^i),~~
{y_{k+1}^{i\ast}}= {y_{k}^{i\ast}}+d_{k+1}^i\sin(\theta_{k+1}^i),
\end{equation}
in which~$d^i_{k+1}$ and~$\theta^i_{k+1}$ denote the distance and angle traveled by robot~$i$, between time~$k$ and~$k+1$, and are random. \textcolor{black}{We choose,~$d^i_{k}$ and~$\theta^i_{k}$, to have uniform distribution over the intervals of~$[0~d_{\max}]$ and~$[0~2\pi]$, respectively, such that each robot,~$i\in\Omega$, does not leave the bounded region of interest.} Also note that the random motion is assumed to be statistically independent among the robots. In what follows, we first provide the simulation results in noiseless scenarios, and then study the effects of noise and the proposed modifications in the convergence of the algorithm.

\subsection{Localization in noiseless scenarios}
We first consider a network of $5$ mobile robots with unknown locations, and only one beacon, whose location is fixed and perfectly known at all time. 
\begin{figure}[!h]
	\centering
	\subfigure{\includegraphics[width=1.68in,height=1.6in]{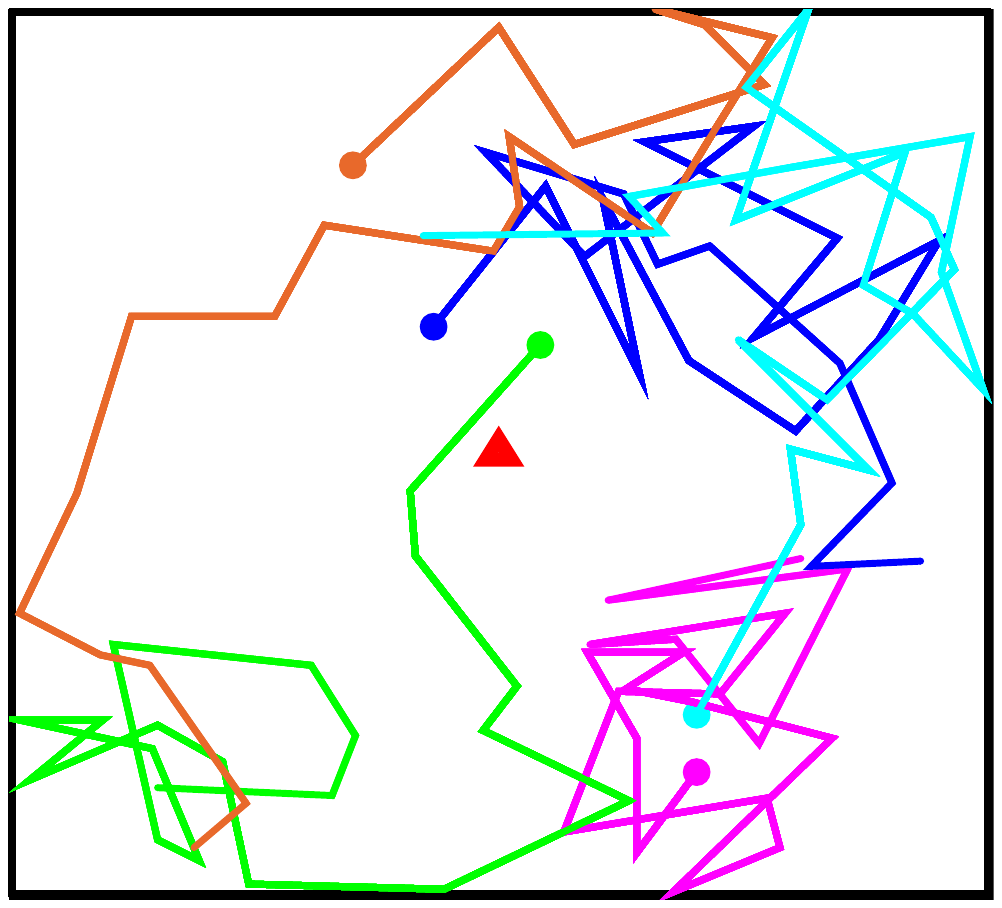}}
	\hspace{0.1cm}
	\subfigure{\includegraphics[width=1.68in,height=1.6in]{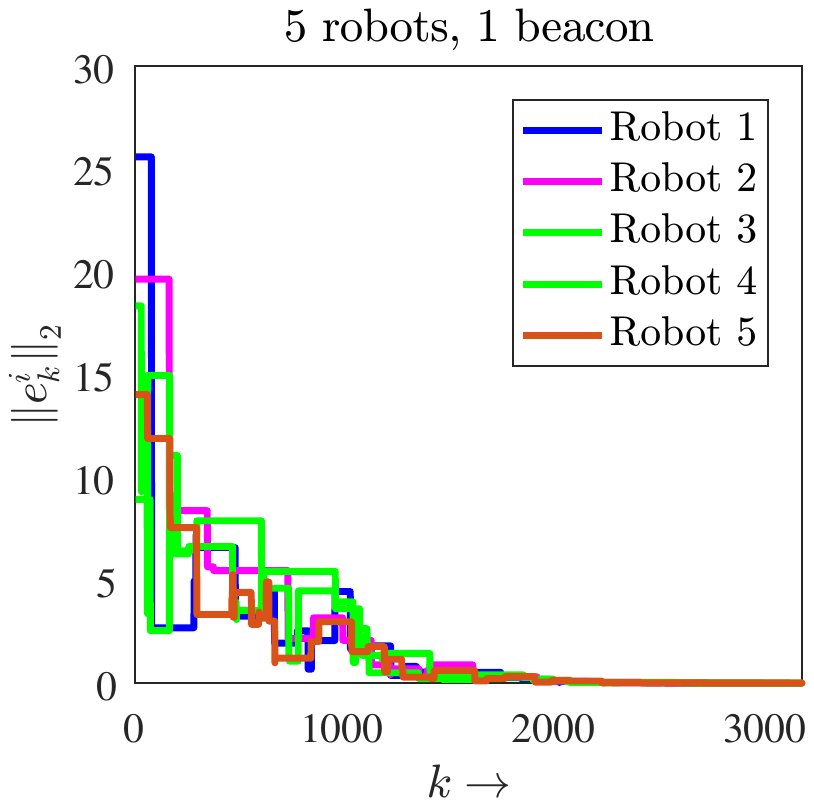}}
	\caption{(Left) Motion model;
		(Right) Convergence. 
		}
	\label{f1-2}
\end{figure}
In the beginning, all robots are randomly deployed inside a $20\mbox{m} \times 20\mbox{m}$ square. 

We fix the location of the beacon in the center of the region. For all simulations in this section, we set the communication radius to $r=2$m \textcolor{black}{and $d_{\max}=5\mbox{m}$}. All robots are initially assigned with a random estimate of their initial locations. If a robot finds at least~$m+1=3$ neighbors, it performs the inclusion test, as described in Section~\ref{ss:DILOC}. If the opportunity occurs, i.e., if a robot finds a triangulation set among the neighboring nodes, it updates its location estimate according to Eq.~\eqref{20}. The robot does not perform any update otherwise. \textcolor{black}{Throughout this section, we consider the scenario, where a robot performs multiple updates at each iteration with respect to all possible triangulation sets found in that iteration.}
 To ensure that the updating robot retains the valuable information it may have received from the beacon, and to guarantee a minimum contribution by the beacon when it is involved in an update, \textcolor{black}{we set~$\alpha_k=\beta=0.01$ and~$\alpha=0.01$, respectively.}

Fig.~\ref{f1-2} (Left) shows the motion model that we choose as random according to Eq.~\eqref{1}, i.e., the trajectories of $N=5$ mobile robots for the first~$20$ iterations. We choose the second norm of the error vector,~$\mb{e}_k^i$, to characterize the convergence, i.e.,
\textcolor{black}{\begin{equation}
{\Vert{\mb{e}^i_k}\Vert}_2=\sqrt{{{\left({{x}^i_k-{x}_k^{i\ast}}\right)}^2+{\left({{y}_k^i-{y}_k^{i\ast}}\right)}^2}}.
\end{equation}}
The algorithm converges to the true robot locations as~${\| \mb{e}_k^i \|_{2}} \rightarrow 0,\forall~i$.} The convergence of the algorithm in this case is illustrated in Fig.~\ref{f1-2} (Right) for one simulation.
\textcolor{black}{In Fig.~\ref{f1-3} (Left), we provide the convergence results for the networks with one beacon and $5$, $10$, $20$ and $100$ robots with unknown locations. Each curve indicates the average over $n=20$ Monte Carlo simulations.
\begin{figure}[!h]
	\centering
	\subfigure{\includegraphics[width=1.65in,height=1.65in]{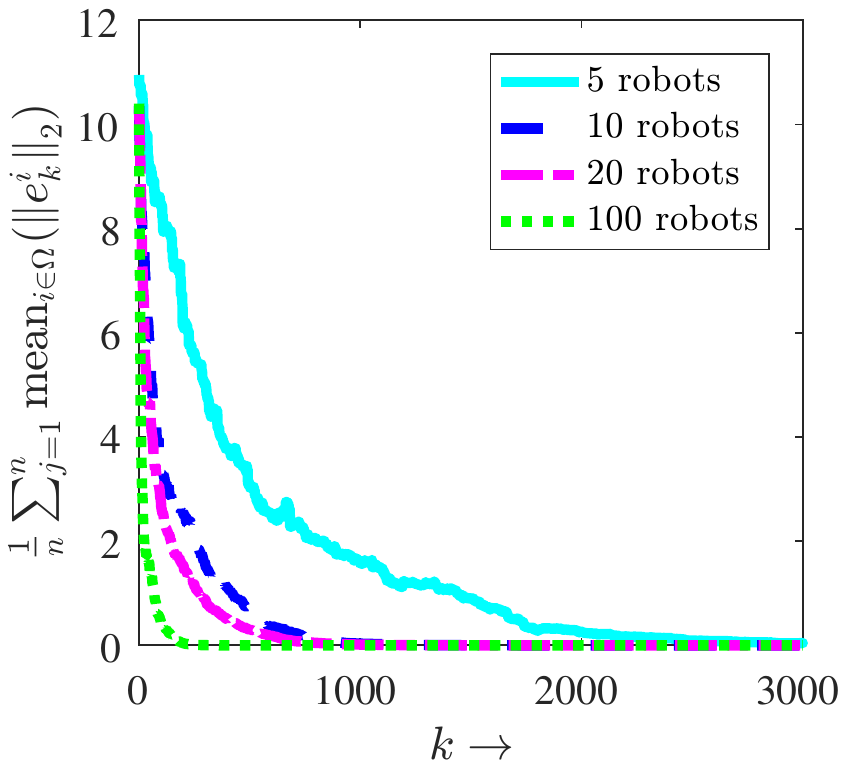}}
	\subfigure{\includegraphics[width=1.6in,height=1.6in]{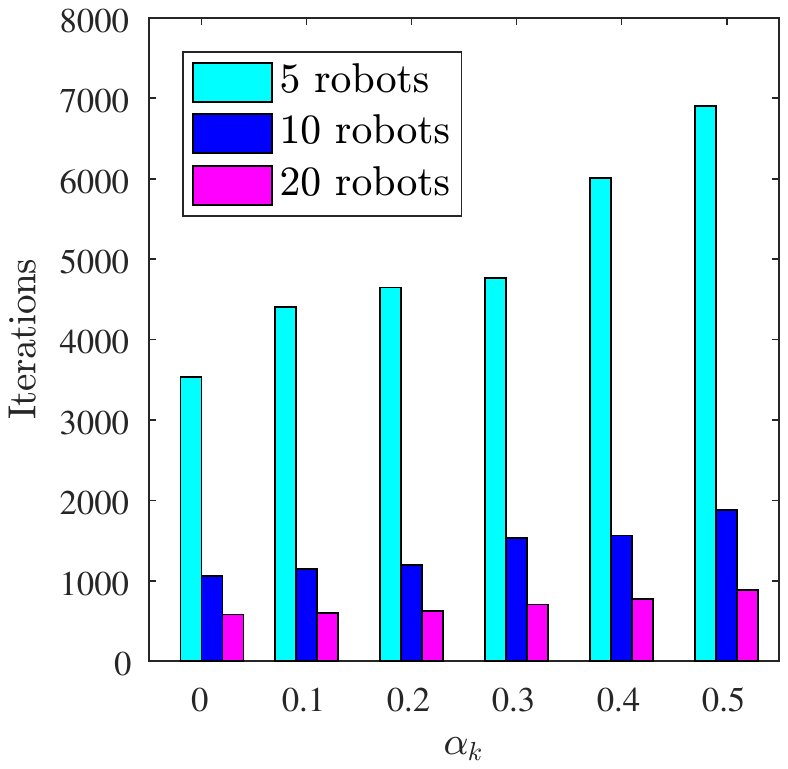}}
	\caption{\textcolor{black}{(Left) Convergence of networks with one beacon and $5$,~$10$,~$20$ and $100$ robots;
		(Right) Effect of self-weights on the convergence rate.}}
	\label{f1-3}
\end{figure}
Due to the opportunistic nature of the algorithm, as the number of robots increases each robot is more likely to find 
triangulation sets and perform successful updates. Therefore, the algorithm converges faster as the number of robots increases.}
\textcolor{black}{In Fig.~\ref{f1-3} (Right), we study the effect of the self-weights on the convergence rate of the algorithm in networks with one beacon and $5$, $10$ and $20$ robots as we change $\alpha_k=\beta$ from $0.01$ to $0.5$. It can be seen that the convergence rate decreases as $\alpha_k$ increases.}
\begin{figure}[!h]
	\centering
	\subfigure{\includegraphics[width=1.6in,height=1.6in]{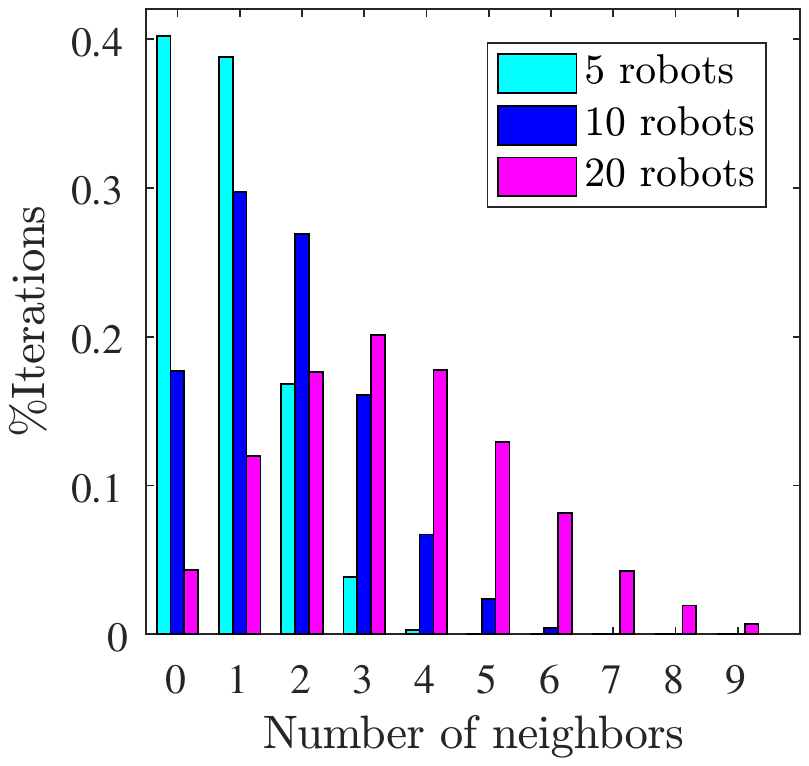}}
	\subfigure{\includegraphics[width=1.6in,height=1.6in]{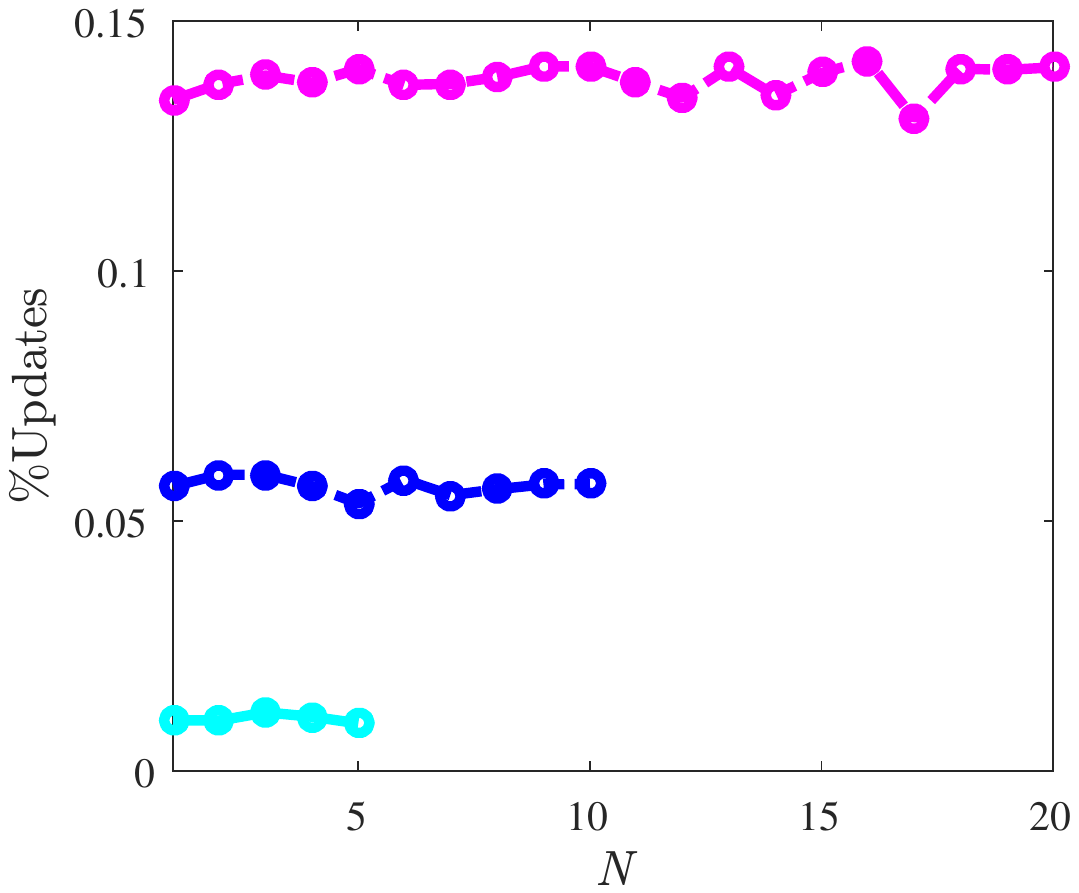}}
	\caption{\textcolor{black}{(Left) Number of neighbors;
		(Right) Number of updates/iterations.}}
	\label{f1-4}
\end{figure}

\textcolor{black}{In Fig.~\ref{f1-4} (Left), we show the percentage of the iterations a robot finds different number of neighbors in networks with one beacon and $5$, $10$ and $20$ robots. For example, a robot finds no neighbors during $40\%$ of the iterations in a network of $5$ robots. Fig.~\ref{f1-4} (Right) shows the ratio of the total number of updates to the number of iterations. We fix the number of iterations to $3000$, and take the average over $n=20$ Monte Carlo simulations. On average, a robot in networks with one beacon and $5$, $10$, and $20$ robots, performs~$31$, $171$, and $422$ updates, respectively.}

\textcolor{black}{We examine the effect of highly adverse initial conditions in Fig.~\ref{f1-5} (Left) for a network with one beacon and $100$ robots.
	 Despite very large initial errors, which is $30$ times larger than the dimension of the region, the algorithm converges in less than $500$ iterations in this simulation. This convergence rate is slower than the average convergence rate illustrated in Fig.~\ref{f1-3} (Left) for networks with $100$ robots primarily because of the large initial error.}
\begin{figure}[!h]
	\centering
	\subfigure{\includegraphics[width=1.68in,height=1.68in]{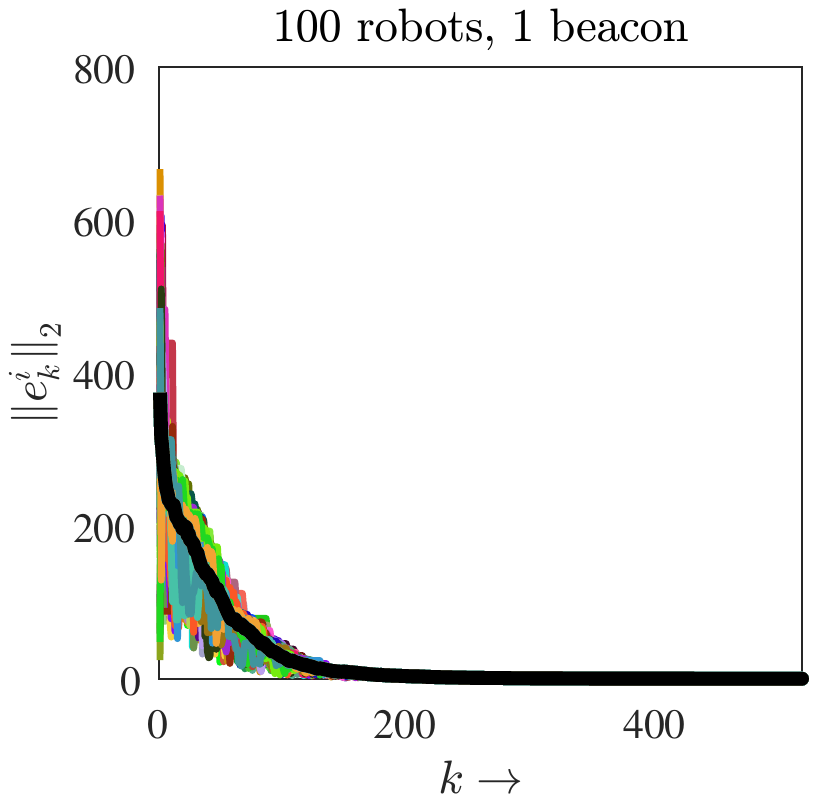}}
	\subfigure{\includegraphics[width=1.68in,height=1.68in]{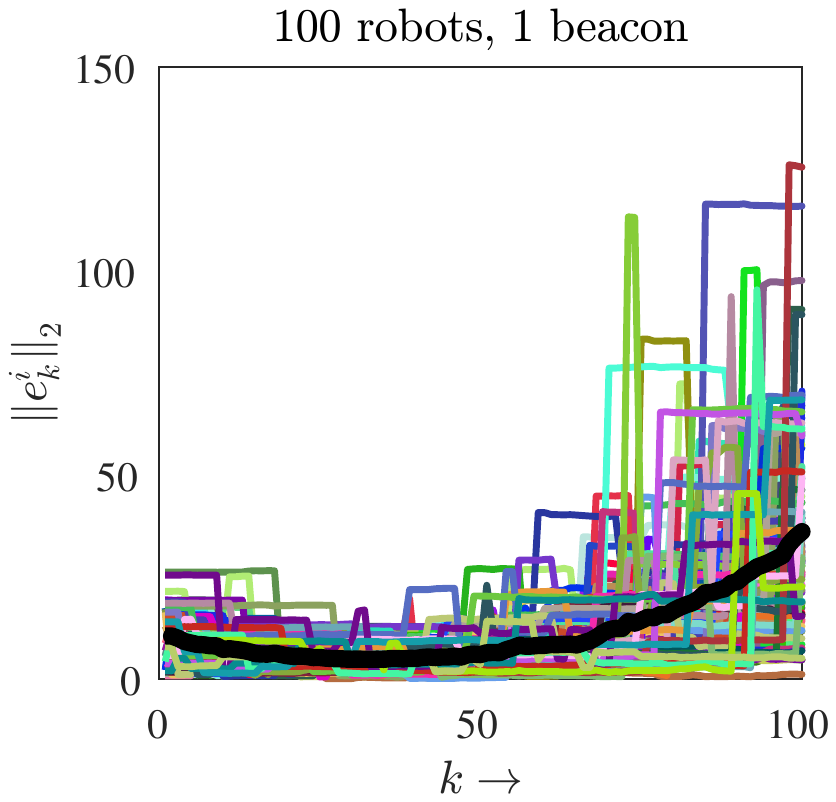}}
	\caption{\textcolor{black}{(Left) Convergence under highly adverse initial estimates;
		(Right) Effect of noise on the convergence; Thick black curves represent the mean error over all robots.}}
	\label{f1-5}
\end{figure}

\vspace{-5mm}

\subsection{Localization in the presence of noise}
\textcolor{black}{We use two different models to examine the effects of noise on the proposed localization algorithm; 
\begin{figure}[!h]
	\centering
	\subfigure{\includegraphics[width=1.5in,height=1.5in]{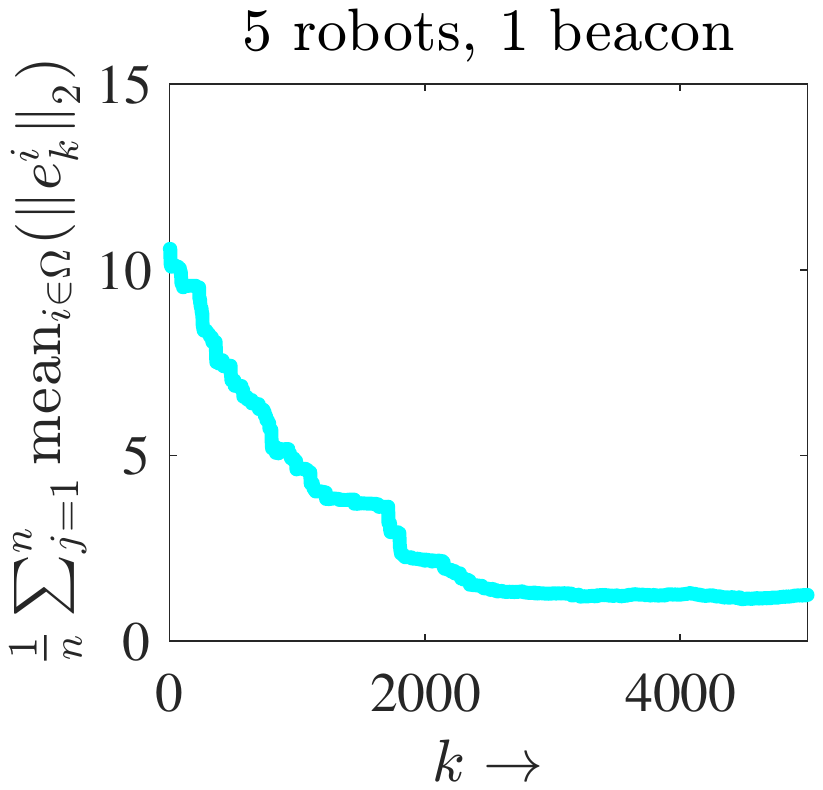}}
	\subfigure{\includegraphics[width=1.5in,height=1.5in]{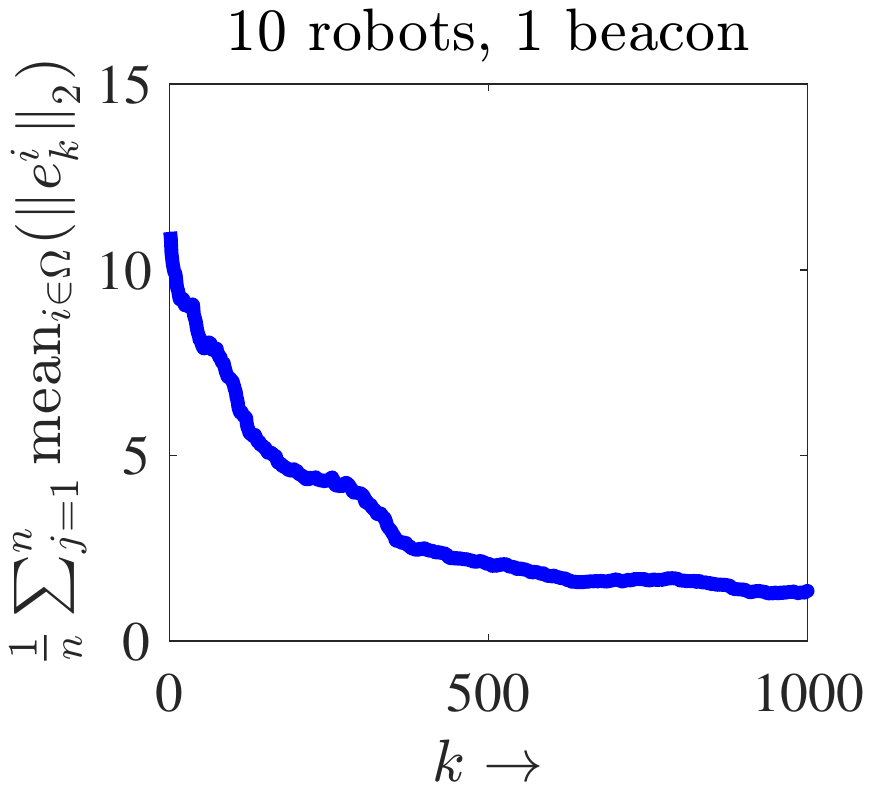}}
	\subfigure{\includegraphics[width=1.5in,height=1.5in]{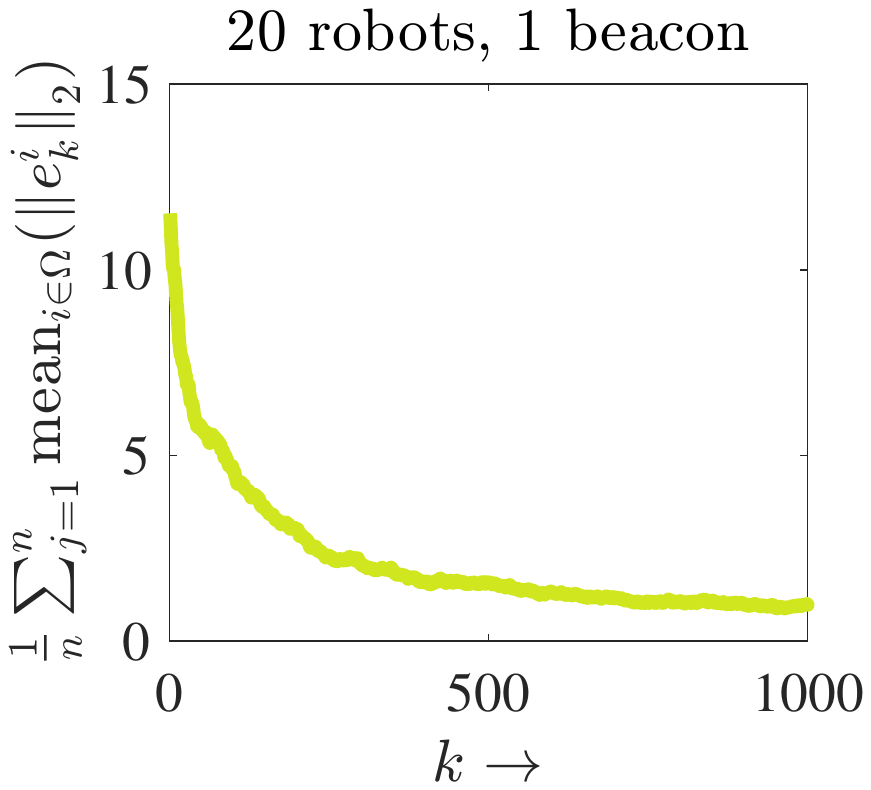}}
	\subfigure{\includegraphics[width=1.5in,height=1.5in]{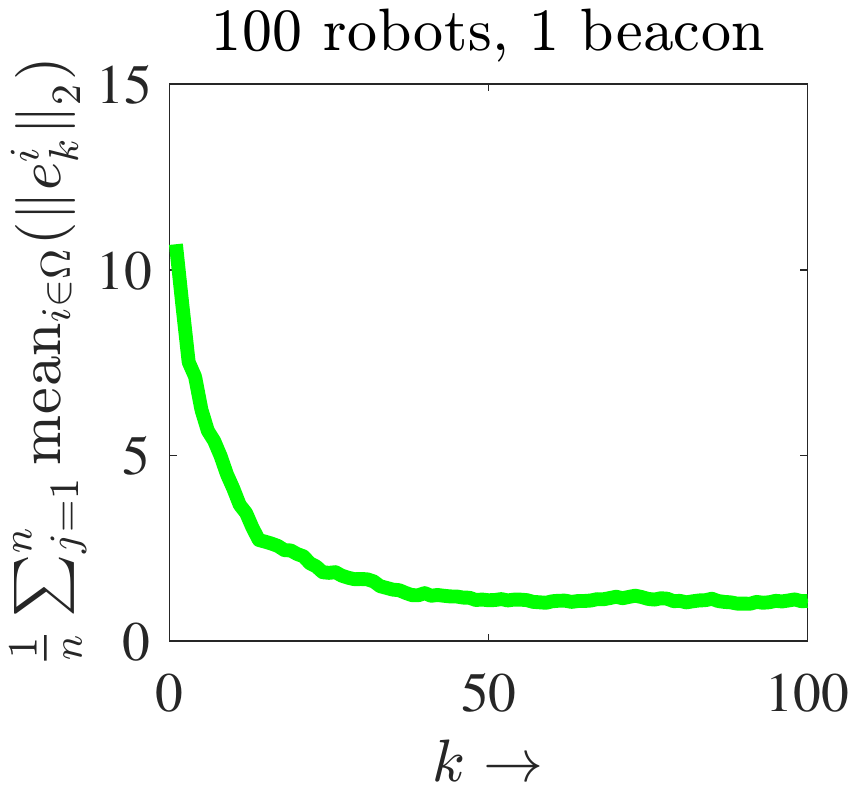}}
	\caption{\textcolor{black}{Modified algorithm under first noise model.}}
	\label{fig13}
\end{figure}	
	First, we assume that the noise on odometry measurements, i.e., the distance and angle that robot $i$ travels at time $k$, are Gaussian with zero mean and the following variances
\begin{eqnarray*}
		{\sigma_d^i}^2={K_d}^2D^i_k,~~
		{\sigma_{\theta}^i}^2={{K_{\theta}}^2} D^i_k,
\end{eqnarray*}
	where $D^i_k$ represents the total distance that robot $i$ has traveled up to time $k$.
	We also assume that the noise on the distance measurement (to a neighboring robot) at time $k$ is normal with zero mean and the variance~of~${\sigma_r^i}^2={K_r}^2 k$.
	Therefore the variances of the odometry measurements are proportional to the total distance a robot has traveled, and the variance on the distance measurements (to the  neighboring robots) increases with time.
	Such assumptions are common in the relevant literature, e.g.,~\cite{martinelli2005observability,chong1997accurate,wanasinghe2014decentralized,paull2014decentralized}.
\begin{figure*}
	\centering
	\subfigure{\includegraphics[width=1.75in,height=1.75in]{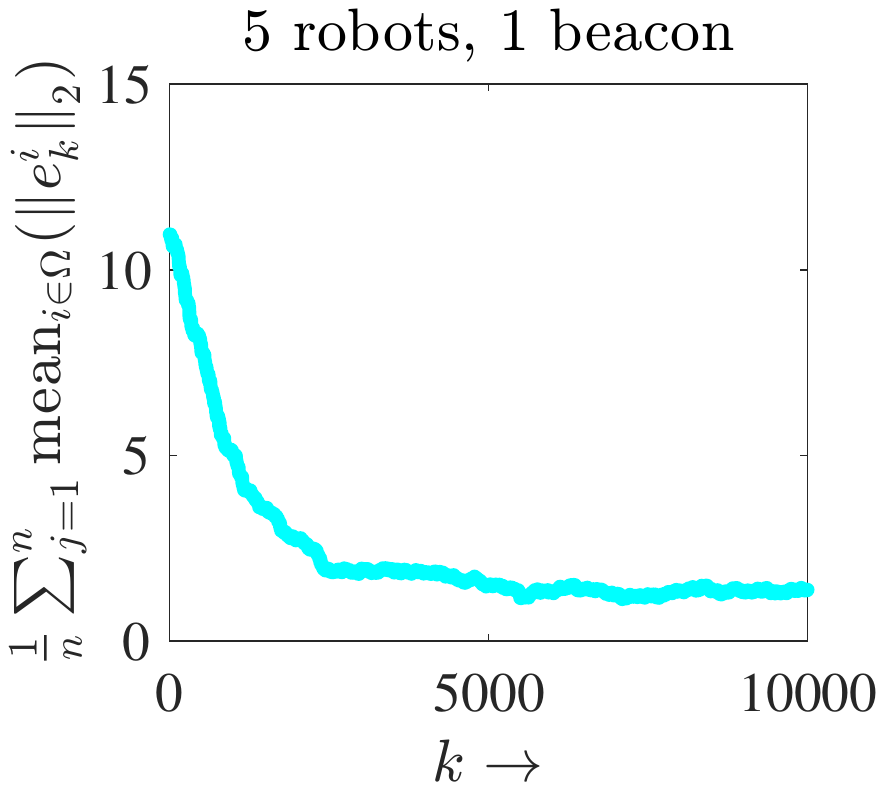}}
	\subfigure{\includegraphics[width=1.75in,height=1.75in]{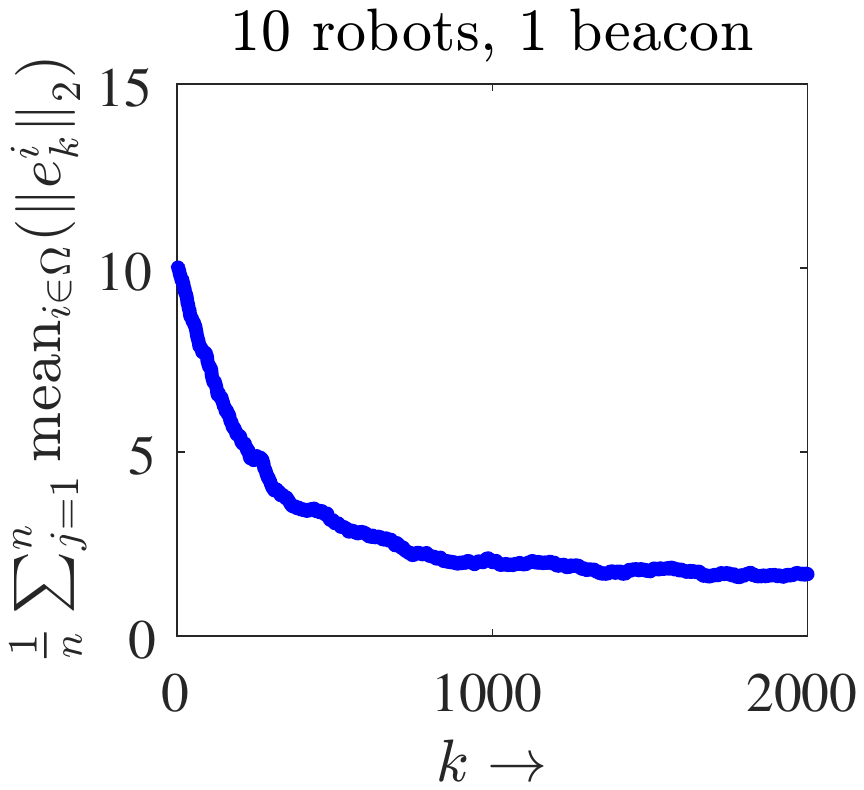}}
	\subfigure{\includegraphics[width=1.75in,height=1.75in]{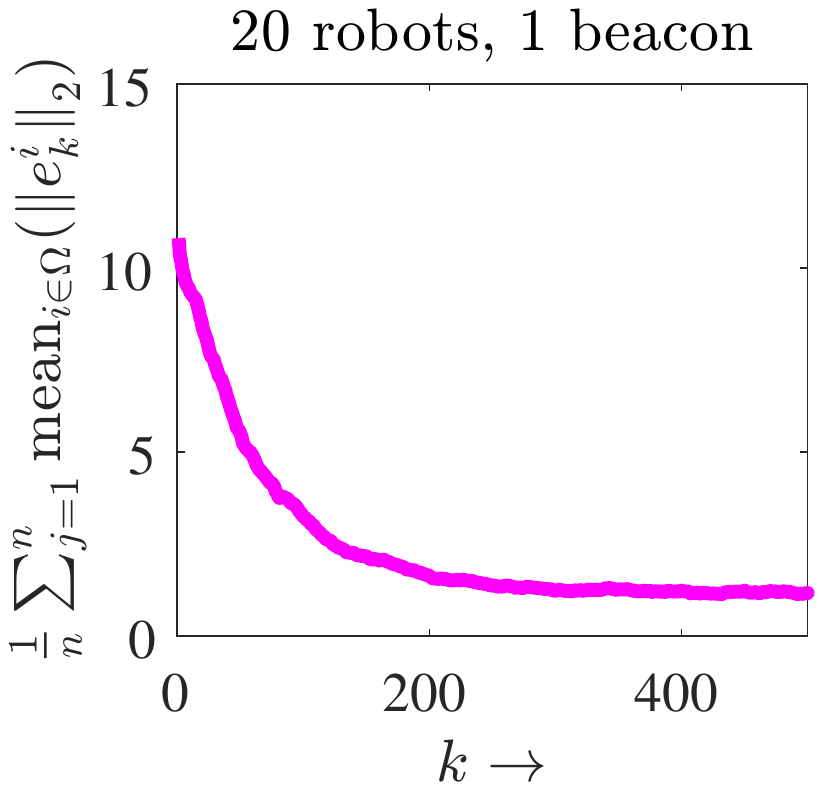}}
	\subfigure{\includegraphics[width=1.75in,height=1.75in]{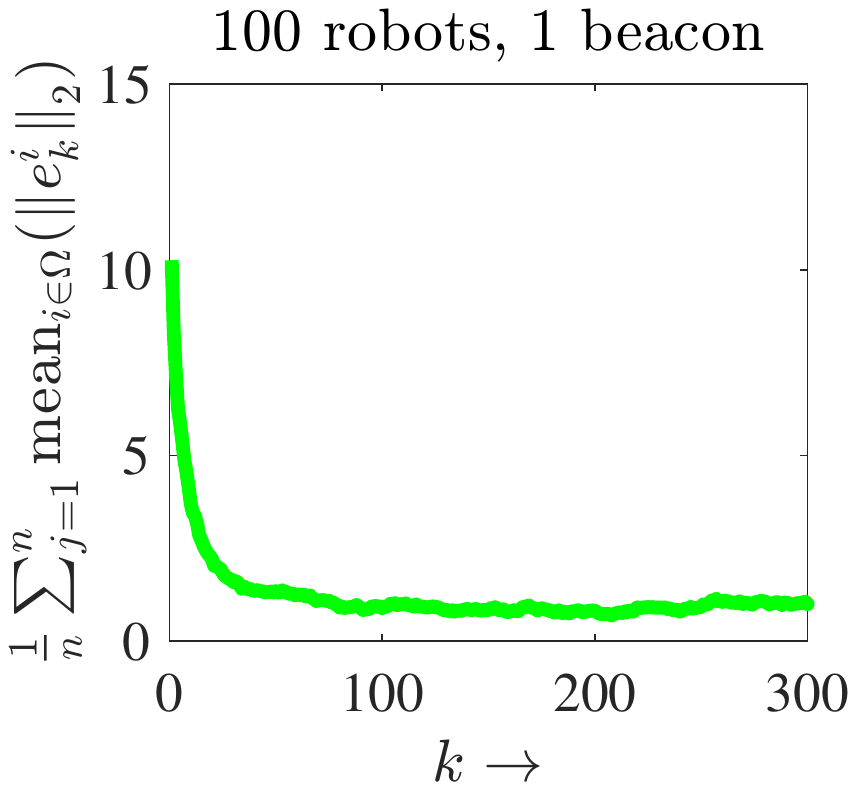}}
	\caption{\textcolor{black}{Modified algorithm under second noise model.}}
	\label{fig14}
\end{figure*}
As shown in Fig.~\ref{f1-5} (Left) for a network with one beacon and $100$ robots, setting ${K_d}=K_{\theta}=K_r=5*10^{-3}$ leads to an unbounded error, which is due to incorrect inclusion test results and the continuous location drifts because of the noise on the distance measurements and the noise on motion, respectively. 
However, by modifying the algorithm according to Section~\ref{noise}, it can be seen in Fig.~\ref{fig13} that the localization error is bounded by the communication radius. In the simulations with noise we choose $\epsilon=20\%$, i.e., a robot performs an update only if the relative inclusion test error, corresponding to the candidate triangulation set is less than $20\%$. }

\textcolor{black}{We evaluate the performance of the algorithm on a different noise model, where at each and every iteration the amount of noise on odometry and distance measurements are proportional to the measurements. In Fig.~\ref{fig14}, we show the simulation results when the amount of noise at each iteration is up to $\pm5\%$ of the measurements.
All the simulations in the presence of noise are averaged over $n=20$ Monte Carlo simulations.}

\subsection{\textcolor{black}{Performance evaluation}}\label{eval}
\textcolor{black}{We now evaluate the performance of our algorithm in contrast with some well-known localization methods; MCL~\cite{hu2004localization}, MSL*~\cite{rudafshani2007localization}, and Range-based SMCL~\cite{dil2006range}.
	\begin{figure}[!h]
		\centering
		\subfigure{\includegraphics[height=2in,width=2in]{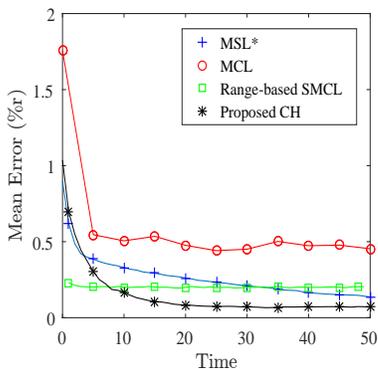}}
		\caption{\textcolor{black}{Accuracy comparison}}
		\label{f16}
	\end{figure} 
	Please refer back to Section~\ref{sec1} for a brief description of these methods.
	In Fig.~\ref{f16}, we compare the localization error in the Convex Hull (CH) algorithm with MCL, MSL*, and Range-based SMC. 
	As shown in Table~\ref{tbl1}, in these algorithms node density,~$n_d$, and beacon (seed) density,~$n_s$, denote as the average number of nodes and beacons in the neighborhood of an agent, respectively. Total number of agents and beacons can therefore be determined by knowing these densities as well as the area of the region. 
	We consider $N=100$ robots and $M=10$ beacons to remain consistent with the setup in~\cite{hu2004localization,rudafshani2007localization,dil2006range}, and use the same metric, i.e., the location error as a percentage of the communication range. Each data point in Fig.~\ref{f16} is computed by averaging the results of $20$ simulation experiments. We keep the other parameters the same as described earlier in this section. With high measurement noise, i.e., in the presence of $10\%$ noise on the range measurements and $1\%$ noise on the motion, our algorithm outperforms MCL, MSL* and Range-based SMCL after $10$ iterations. Clearly, the localization error in our algorithm decreases as the amount of noise decreases, and our algorithm converges to the exact robot locations in the absence of noise.}
\textcolor{black}{Table~\ref{tbl1} summarizes the performance of the proposed CH algorithm in comparison to the above methods.}
\begin{table}[!h]
	\centering
	\subfigure{\includegraphics[height=1.5in]{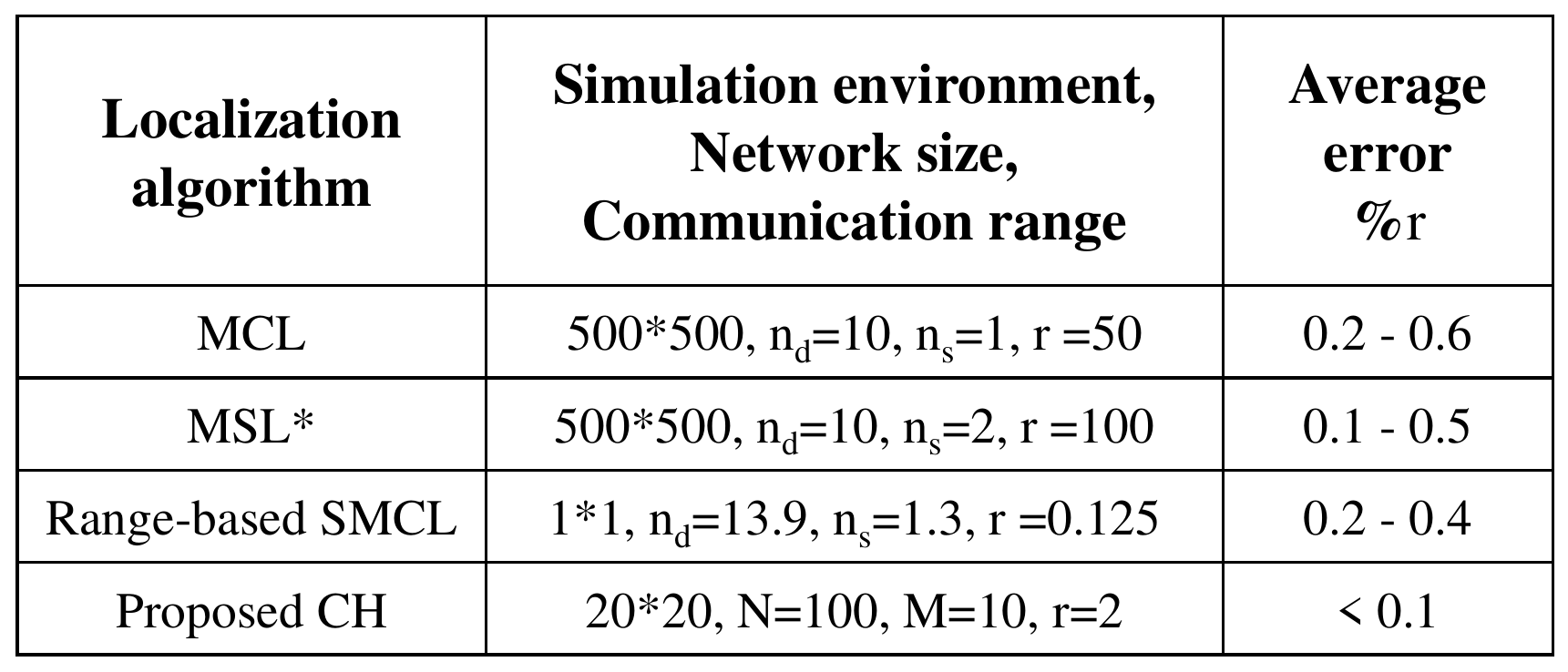}}
	\caption{\textcolor{black}{Comparative performance of localization algorithms}}
	\label{tbl1}
\end{table}

\vspace{-7mm}\section{Remarks}\label{chal}
\vspace{-3mm}\subsection{Convergence rate}
In some particular applications, e.g., search and rescue in hazardous environments, the robots need not only to find their locations, but also to finish the localization process successfully \textit{in a finite time. The convergence rate of the algorithm becomes crucial in such applications.} \textcolor{black}{Since the algorithm is asymptotic, one can design appropriate termination criteria that are application-dependent. For example, one such criterion can be designed according to the number of iterations typically needed given the size, mobility, models, and noise parameters, as evident from the simulation figures in Section~\ref{sec7}.
We do note that the convergence improves dramatically as the size of the network increases.}

\textcolor{black}{\vspace{-7mm}\subsection{Computational complexity}}\label{chalb}

\textcolor{black}{As explained in Appendx A, the Cayley-Menger determinant is the determinant of an $(m+2)\times(m+2)$ symmetric matrix that relates
	the distances among the $m+1$ points in a set, $\Theta_\ell \in \mathbb{R}^m$, to
	the volume of their convex hull. Thus, the dimension of such determinants only depends on the dimension of the corresponding Euclidean space. Considering our localization algorithm in $\mathbb{R}^2$, when a robot finds enough (at least $m+1=3$) neighbors, it has to calculate~$m+2=4$ Caley-Menger determinants of $4\times4$ matrices to perform the inclusion test. Similarly, in~$\mathbb{R}^3$, when a robot finds at least~$m+1=4$ neighbors, it needs to calculate $m+2=5$ determinants of~$5\times5$ matrices to perform the inclusion test. Note that to perform the inclusion test in~$\mathbb{R}^2$ and~$\mathbb{R}^3$, a robot has to compute four areas and five volumes, respectively. Since the complexity of the computation of an~$n\times n$ determinant is $\mathcal{O}(n!)$, considering at most one update per iteration, the computation complexity of the algorithm for an updating robot in $\mathbb{R}^m$ is $(m+2)\mathcal{O}((m+2)!)$, where $m\leq3$.}

\textcolor{black}{\vspace{-0mm}\subsection{Communication loss and drops}
In the event of a temporary communication loss/drop, a robot will not be able to perform an update and/or serve as a neighboring node in any triangulation set. However, since the algorithm is opportunistic and fully distributed, this is not going to affect the performance of the rest of the team except that it reduces the chance for the other robots to find a triangulation set. As the size of the network grows, such effect becomes negligible.
However, if the failure is permanent, i.e., one robot gets isolated from the network by the end of the localizing process, it will not be able to find its location and the size of the network gets reduced by one. 
}
\vspace{-0mm}\subsection{Challenges and future work}
We note that a successful implementation of this algorithm requires a knowledge of the associated parameters,~$\alpha,\alpha_k,\epsilon$. We provide some insight into these questions in the simulation section of this paper and have discussed the theoretical relevance of each one throughout the paper. In future, we will study how the changes in these parameters effect the convergence or performance of the underlying localization algorithm, which may require a large number of practical experiments. As a specific example, if the statistics of the distance/motion noise variables are known, one may be able to design~$\epsilon$ so the inclusion test is correct (with a very high probability) regardless of the noise parameters. \textcolor{black}{As part of our future work, we are also planning to test our localization algorithm in our lab both on
UGV’s (in R2) and UAV’s (in R3).}

\section{Conclusions}\label{sec8}
In this paper, we provide a \emph{linear} distributed algorithm to localize an arbitrary number of mobile robots moving in a bounded region. We assume that each robot can measure a noisy version of its motion as well as its distance to the neighboring nodes. We consider an opportunistic algorithm such that the robots update their location estimates in~${\mathbb{R}}^m$, as a linear-convex combination of their~$m+1$ neighbors if they lie inside their convex hull. The updating robot uses the opportunistic information exchange to refine its location estimate by using a barycentric-based convex update rule. We abstract the algorithm as an LTV system with (sub-)stochastic matrices, and show that it converges to the true robot locations under some mild regularity conditions on update weights. We also relate the dimension of motion in the network to the number of beacons required, and 
 show that a network of mobile robots with full degrees of freedom in the motion can be localized precisely, as long as there is at least one 
 beacon in the network. 
We evaluate the performance of the algorithm \textcolor{black}{in the presence of noise and provide modifications to the proposed algorithm 
	to address the undesirable effects of noise.}
\appendices
\textcolor{black}{\vspace{-8mm}\section{Cayley-Menger determinant}\label{cm}
Consider a set of $m+1$ points, $\Theta_\ell$, in $m$-dimensional Euclidean space. Cayley-Menger (CM) determinant is the determinant
of an~$(m+2)\times(m+2)$ symmetric matrix, which uses the pairwise distance information in the set $\Theta_\ell$, to compute the hypervolume,~$A_{\Theta_\ell}$, of their convex hull, $\mathcal{C}(\Theta_\ell)$. The CM determinant is given by
\begin{equation}\label{cmeq}
A_{\Theta_\ell}^2=\frac{1}{s_{m+1}}
\begin{vmatrix}
0 & {\bf{1}}_{m+1}^T\\
{\bf{1}}_{m+1} & {\bf{D}}
\end{vmatrix},
\end{equation}
in which~${\bf{1}}_{m+1}$ denotes an $m+1$-dimensional column
vector of $1$s,~${\bf{D}}=\{d_{{\ell}j}^2\},~\ell,j\in\Theta_l$, is the~$(m+1)\times(m+1)$ matrix of squared distances, $d_{{\ell}j}$, within the set, $\Theta_\ell$, and
\begin{equation}\label{cmeq2}
s_m=\frac{2^m{(m!)}^2}{{(-1)}^{m+1}},~m\in\{0,1,2,\ldots \}.
\end{equation}
The second and third coefficients in the above sequence that are relevant in $\mathbb{R}^2$ and $\mathbb{R}^3$ are $-16$ and $288$, respectively.}

\bibliographystyle{IEEEtran}
\bibliography{bibliography_Arxiv}

\begin{thebibliography}{10}
\providecommand{\url}[1]{#1}
\csname url@samestyle\endcsname
\providecommand{\newblock}{\relax}
\providecommand{\bibinfo}[2]{#2}
\providecommand{\BIBentrySTDinterwordspacing}{\spaceskip=0pt\relax}
\providecommand{\BIBentryALTinterwordstretchfactor}{4}
\providecommand{\BIBentryALTinterwordspacing}{\spaceskip=\fontdimen2\font plus
\BIBentryALTinterwordstretchfactor\fontdimen3\font minus
  \fontdimen4\font\relax}
\providecommand{\BIBforeignlanguage}[2]{{%
\expandafter\ifx\csname l@#1\endcsname\relax
\typeout{** WARNING: IEEEtran.bst: No hyphenation pattern has been}%
\typeout{** loaded for the language `#1'. Using the pattern for}%
\typeout{** the default language instead.}%
\else
\language=\csname l@#1\endcsname
\fi
#2}}
\providecommand{\BIBdecl}{\relax}
\BIBdecl

\bibitem{burgard1997fast}
W.~Burgard, D.~Fox, and D.~Hennig, ``Fast grid-based position tracking for
  mobile robots,'' in \emph{KI-97: Advances in Artificial Intelligence}.\hskip
  1em plus 0.5em minus 0.4em\relax Springer, 1997, pp. 289--300.

\bibitem{fox1999markov}
D.~Fox, W.~Burgard, and S.~Thrun, ``Markov localization for mobile robots in
  dynamic environments,'' \emph{Journal of Artificial Intelligence Research},
  pp. 391--427, 1999.

\bibitem{dil2006range}
B.~Dil, S.~Dulman, and P.~Havinga, ``Range-based localization in mobile sensor
  networks,'' in \emph{Wireless Sensor Networks}.\hskip 1em plus 0.5em minus
  0.4em\relax Springer, 2006, pp. 164--179.

\bibitem{zhang2005range}
Y.~Zhang, W.~Wu, and Y.~Chen, ``A range-based localization algorithm for
  wireless sensor networks,'' \emph{Journal of Communications and Networks},
  vol.~7, no.~4, pp. 429--437, 2005.

\bibitem{hu2004localization}
L.~Hu and D.~Evans, ``Localization for mobile sensor networks,'' in
  \emph{Proceedings of the 10th annual international conference on Mobile
  computing and networking}.\hskip 1em plus 0.5em minus 0.4em\relax ACM, 2004,
  pp. 45--57.

\bibitem{rudafshani2007localization}
M.~Rudafshani and S.~Datta, ``Localization in wireless sensor networks,'' in
  \emph{Information Processing in Sensor Networks, 2007. IPSN 2007. 6th
  International Symposium on}.\hskip 1em plus 0.5em minus 0.4em\relax IEEE,
  2007, pp. 51--60.

\bibitem{borenstein1996measurement}
J.~Borenstein and L.~Feng, ``Measurement and correction of systematic odometry
  errors in mobile robots,'' \emph{IEEE Transactions on Robotics and
  Automation}, vol.~12, no.~6, pp. 869--880, 1996.

\bibitem{cho2011dead}
B.~S. Cho, W.~S. Moon, W.~Seo, and K.~R. Baek, ``A dead reckoning localization
  system for mobile robots using inertial sensors and wheel revolution
  encoding,'' \emph{Journal of mechanical science and technology}, vol.~25,
  no.~11, pp. 2907--2917, 2011.

\bibitem{fu2013precise}
G.~Fu, J.~Zhang, W.~Chen, F.~Peng, P.~Yang, and C.~Chen, ``Precise localization
  of mobile robots via odometry and wireless sensor network,''
  \emph{International Journal of Advanced Robotic Systems}, vol.~10, no.~4, p.
  203, 2013.

\bibitem{cummins2008fab}
M.~Cummins and P.~Newman, ``{FAB-MAP}: Probabilistic localization and mapping
  in the space of appearance,'' \emph{The International Journal of Robotics
  Research}, vol.~27, no.~6, pp. 647--665, 2008.

\bibitem{frese2005multilevel}
U.~Frese, P.~Larsson, and T.~Duckett, ``A multilevel relaxation algorithm for
  simultaneous localization and mapping,'' \emph{IEEE Transactions on
  Robotics}, vol.~21, no.~2, pp. 196--207, 2005.

\bibitem{bailey2006simultaneous}
T.~Bailey and H.~Durrant-Whyte, ``{Simultaneous Localization and Mapping}
  {(SLAM)}: Part {II},'' \emph{IEEE Robotics \& Automation Magazine}, vol.~13,
  no.~3, pp. 108--117, 2006.

\bibitem{grisetti2007improved}
G.~Grisetti, C.~Stachniss, and W.~Burgard, ``Improved techniques for grid
  mapping with rao-blackwellized particle filters,'' \emph{IEEE transactions on
  Robotics}, vol.~23, no.~1, pp. 34--46, 2007.

\bibitem{engel2014lsd}
J.~Engel, T.~Sch{\"o}ps, and D.~Cremers, ``Lsd-slam: Large-scale direct
  monocular slam,'' in \emph{European Conference on Computer Vision}.\hskip 1em
  plus 0.5em minus 0.4em\relax Springer, 2014, pp. 834--849.

\bibitem{dellaert1999monte}
F.~Dellaert, D.~Fox, W.~Burgard, and S.~Thrun, ``{Monte Carlo} localization for
  mobile robots,'' in \emph{IEEE International Conference on Robotics and
  Automation}, vol.~2, 1999, pp. 1322--1328.

\bibitem{thrun2001robust}
S.~Thrun, D.~Fox, W.~Burgard, and F.~Dellaert, ``Robust {Monte Carlo}
  localization for mobile robots,'' \emph{Artificial intelligence}, vol. 128,
  no.~1, pp. 99--141, 2001.

\bibitem{A1roumeliotis2002distributed}
S.~I. Roumeliotis and G.~A. Bekey, ``Distributed multirobot localization,''
  \emph{IEEE Transactions on Robotics and Automation}, vol.~18, no.~5, pp.
  781--795, 2002.

\bibitem{A2carrillo2013decentralized}
L.~C. Carrillo-Arce, E.~D. Nerurkar, J.~L. Gordillo, and S.~I. Roumeliotis,
  ``Decentralized multi-robot cooperative localization using covariance
  intersection,'' in \emph{2013 IEEE/RSJ International Conference on
  Intelligent Robots and Systems}.\hskip 1em plus 0.5em minus 0.4em\relax IEEE,
  2013, pp. 1412--1417.

\bibitem{A3li2013cooperative}
H.~Li and F.~Nashashibi, ``Cooperative multi-vehicle localization using split
  covariance intersection filter,'' \emph{IEEE Intelligent transportation
  systems magazine}, vol.~5, no.~2, pp. 33--44, 2013.

\bibitem{A4wanasinghe2014decentralized}
T.~R. Wanasinghe, G.~K. Mann, and R.~G. Gosine, ``Decentralized cooperative
  localization for heterogeneous multi-robot system using split covariance
  intersection filter,'' in \emph{Computer and Robot Vision (CRV), 2014
  Canadian Conference on}.\hskip 1em plus 0.5em minus 0.4em\relax IEEE, 2014,
  pp. 167--174.

\bibitem{A5martinelli2005multi}
A.~Martinelli, F.~Pont, and R.~Siegwart, ``Multi-robot localization using
  relative observations,'' in \emph{Proceedings of the 2005 IEEE international
  conference on robotics and automation}.\hskip 1em plus 0.5em minus
  0.4em\relax IEEE, 2005, pp. 2797--2802.

\bibitem{martinelli2005observability}
A.~Martinelli and R.~Siegwart, ``Observability analysis for mobile robot
  localization,'' in \emph{2005 IEEE/RSJ International Conference on
  Intelligent Robots and Systems}.\hskip 1em plus 0.5em minus 0.4em\relax IEEE,
  2005, pp. 1471--1476.

\bibitem{ahmad2013extended}
H.~Ahmad and T.~Namerikawa, ``Extended kalman filter-based mobile robot
  localization with intermittent measurements,'' \emph{Systems Science \&
  Control Engineering: An Open Access Journal}, vol.~1, no.~1, pp. 113--126,
  2013.

\bibitem{biswas2013multi}
J.~Biswas and M.~Veloso, ``Multi-sensor mobile robot localization for diverse
  environments,'' in \emph{Robot Soccer World Cup}.\hskip 1em plus 0.5em minus
  0.4em\relax Springer, 2013, pp. 468--479.

\bibitem{d2015mobile}
L.~D’Alfonso, W.~Lucia, P.~Muraca, and P.~Pugliese, ``Mobile robot
  localization via ekf and ukf: A comparison based on real data,''
  \emph{Robotics and Autonomous Systems}, vol.~74, pp. 122--127, 2015.

\bibitem{7862901}
S.~Safavi and U.~Khan, ``Localization in mobile networks via virtual convex
  hulls,'' \emph{IEEE Transactions on Signal and Information Processing over
  Networks}, 2017.

\bibitem{thomas2005revisiting}
F.~Thomas and L.~Ros, ``Revisiting trilateration for robot localization,''
  \emph{IEEE Transactions on Robotics}, vol.~21, no.~1, pp. 93--101, 2005.

\bibitem{zhou2009efficient}
Y.~Zhou, ``An efficient least-squares trilateration algorithm for mobile robot
  localization,'' in \emph{IEEE International Conference on Intelligent Robots
  and Systems}, 2009, pp. 3474--3479.

\bibitem{maxim2008trilateration}
P.~M. Maxim, S.~Hettiarachchi, W.~M. Spears, D.~F. Spears, J.~C. Hamann,
  T.~Kunkel, and C.~Speiser, ``Trilateration localization for multi-robot
  teams.'' in \emph{ICINCO-RA (2)}, 2008, pp. 301--307.

\bibitem{betke1997mobile}
M.~Betke and L.~Gurvits, ``Mobile robot localization using landmarks,''
  \emph{IEEE Transactions on Robotics and Automation}, vol.~13, no.~2, pp.
  251--263, 1997.

\bibitem{mourikis2006performance}
A.~Mourikis and S.~Roumeliotis, ``Performance analysis of multirobot
  cooperative localization,'' \emph{IEEE Transactions on Robotics}, vol.~22,
  no.~4, pp. 666--681, 2006.

\bibitem{mourikis2006optimal}
------, ``Optimal sensor scheduling for resource-constrained localization of
  mobile robot formations,'' \emph{IEEE Transactions on Robotics}, vol.~22,
  no.~5, pp. 917--931, 2006.

\bibitem{dieudonne2010deterministic}
Y.~Dieudonn{\'e}, O.~Labbani-Igbida, and F.~Petit, ``Deterministic
  robot-network localization is hard,'' \emph{IEEE Transactions on Robotics},
  vol.~26, no.~2, pp. 331--339, 2010.

\bibitem{zhou2011determining}
X.~S. Zhou and S.~I. Roumeliotis, ``Determining the robot-to-robot 3{D}
  relative pose using combinations of range and bearing measurements (part
  {II}),'' in \emph{IEEE International Conference on Robotics and Automation
  (ICRA)}.\hskip 1em plus 0.5em minus 0.4em\relax IEEE, May 2011, pp.
  4736--4743.

\bibitem{franchi2013mutual}
A.~Franchi, G.~Oriolo, and P.~Stegagno, ``Mutual localization in multi-robot
  systems using anonymous relative measurements,'' \emph{The International
  Journal of Robotics Research}, vol.~32, no.~11, pp. 1302--1322, 2013.

\bibitem{6469211}
E.~DiGiampaolo and F.~Martinelli, ``Mobile robot localization using the phase
  of passive uhf rfid signals,'' \emph{IEEE Transactions on Industrial
  Electronics}, vol.~61, no.~1, pp. 365--376, Jan 2014.

\bibitem{6631396}
A.~Ahmad, G.~D. Tipaldi, P.~Lima, and W.~Burgard, ``Cooperative robot
  localization and target tracking based on least squares minimization,'' in
  \emph{2013 IEEE International Conference on Robotics and Automation}, May
  2013, pp. 5696--5701.

\bibitem{mirkhani2013novel}
M.~Mirkhani, R.~Forsati, A.~M. Shahri, and A.~Moayedikia, ``A novel efficient
  algorithm for mobile robot localization,'' \emph{Robotics and Autonomous
  Systems}, vol.~61, no.~9, pp. 920--931, 2013.

\bibitem{4557738}
S.~Zhang, J.~Cao, L.~Chen, and D.~Chen, ``Locating nodes in mobile sensor
  networks more accurately and faster,'' in \emph{Sensor, Mesh and Ad Hoc
  Communications and Networks, 2008. SECON '08. 5th Annual IEEE Communications
  Society Conference on}, June 2008, pp. 37--45.

\bibitem{DBLP:journals/corr/SafaviK14}
\BIBentryALTinterwordspacing
S.~Safavi and U.~A. Khan, ``Asymptotic stability of stochastic {LTV} systems
  with applications to distributed dynamic fusion,'' \emph{IEEE Transactions on
  Automatic Control}, provisionally accepted. [Online]. Available:
  \url{http://arxiv.org/abs/1412.8018}
\BIBentrySTDinterwordspacing

\bibitem{7526779}
------, ``Unbounded connectivity: Asymptotic stability criteria for stochastic
  {LTV} systems,'' in \emph{2016 American Control Conference (ACC)}, July 2016,
  pp. 7019--7024.

\bibitem{sippl1986cayley}
M.~J. Sippl and H.~A. Scheraga, ``Cayley-menger coordinates,''
  \emph{Proceedings of the National Academy of Sciences}, vol.~83, no.~8, pp.
  2283--2287, 1986.

\bibitem{mobius1827barycentrische}
A.~F. M{\"o}bius, \emph{Der barycentrische calcul}, 1827.

\bibitem{hamzehei}
S.~Hamzehei and M.~F. Duarte, ``Compressive direction-of-arrival estimation off
  the grid,'' in \emph{2016 50th Asilomar Conference on Signals, Systems and
  Computers}, Nov 2016, pp. 1081--1085.

\bibitem{PatwariThesiss}
N.~Patwari, ``{Location Estimation in Sensor Networks},'' Ph.{D}. thesis,
  University of Michigan--Ann Arbor, 2005.

\bibitem{camera}
Y.~G. Kim, J.~An, and K.~D. Lee, ``Localization of mobile robot based on fusion
  of artificial landmark and {RF TDOA} distance under indoor sensor network,''
  \emph{International Journal of Advanced Robotic Systems}, vol.~8, no.~4, pp.
  203--211, September 2011.

\bibitem{navarro1999beacon}
L.~E. Navarro-Serment, C.~J. Paredis, and P.~K. Khosla, ``A beacon system for
  the localization of distributed robotic teams,'' in \emph{Proceedings of the
  International Conference on Field and Service Robotics.}, vol.~6, 1999.

\bibitem{chong1997accurate}
K.~S. Chong and L.~Kleeman, ``Accurate odometry and error modelling for a
  mobile robot,'' in \emph{Robotics and Automation, 1997. Proceedings., 1997
  IEEE International Conference on}, vol.~4.\hskip 1em plus 0.5em minus
  0.4em\relax IEEE, 1997, pp. 2783--2788.

\bibitem{wanasinghe2014decentralized}
T.~R. Wanasinghe, G.~K. Mann, and R.~G. Gosine, ``Decentralized cooperative
  localization for heterogeneous multi-robot system using split covariance
  intersection filter,'' in \emph{Computer and Robot Vision (CRV), 2014
  Canadian Conference on}.\hskip 1em plus 0.5em minus 0.4em\relax IEEE, 2014,
  pp. 167--174.

\bibitem{paull2014decentralized}
L.~Paull, M.~Seto, and J.~J. Leonard, ``Decentralized cooperative trajectory
  estimation for autonomous underwater vehicles,'' in \emph{2014 IEEE/RSJ
  International Conference on Intelligent Robots and Systems}.\hskip 1em plus
  0.5em minus 0.4em\relax IEEE, 2014, pp. 184--191.

\end{thebibliography}
\end{document}